\documentclass[twoside,11pt]{article}

\usepackage[preprint]{jmlr2e_nodeco}
\usepackage{mathtools}
\mathtoolsset{showonlyrefs}
\usepackage{amsmath,amssymb,amsthm}
\usepackage{bm}
\usepackage{dsfont}
\usepackage{enumitem}
\usepackage[final]{showlabels} %
\usepackage[table,xcdraw]{xcolor}
\usepackage{multirow}
\usepackage{circuitikz}
\usepackage{hhline}
\usepackage{booktabs} %
\usepackage{hyperref}
\usepackage[mathscr]{eucal}    %
\usepackage{centernot}
\usepackage{multirow}
\usepackage{hhline}
\usepackage{multicol}

\newtheorem{theoremA}{Theorem}[section]
\newtheorem{lemmaA}[theoremA]{Lemma}

\definecolor{Brown}{rgb}{0.64,0.16,0.16}
\definecolor{OliveGreen}{rgb}{0.1,0.4,0.1}

\newcommand{\R}{\mathbb{R}}   %
\renewcommand{\P}{\mathbb P} %
\newcommand{\Q}{\mathbb Q} %
\newcommand{\E}{\mathbb{E}} %
\renewcommand{\d}{\mathrm{d}} %
\newcommand{\tb}{\textbf}    %
\newcommand{\fip}[2]{\left\langle{#1}\right\rangle_{#2}} %
\newcommand{\norm}[2]{\left\|{#1}\right\|_{#2}} %
\newcommand{\KL}{\operatorname{KL}} %
\newcommand{\supp}{\operatorname{supp}} %
\newcommand{\MMD}[1][K]{\operatorname{MMD}_{#1}}
\newcommand{\HSIC}[1][K]{\operatorname{HSIC}_{#1}}
\newcommand{\KSD}[1][K_{\P_0}]{\operatorname{KSD}_{#1}}
\renewcommand{\H}{\mathscr{H}} %
\renewcommand{\O}{\mathcal{O}} %
\newcommand{\X}{\mathcal{X}} %
\newcommand{\Y}{\mathcal{Y}} %
\newcommand{\Np}{\mathbb{N}_{>0}} %
\newcommand{\N}{\mathbb{N}_{0}} %
\renewcommand{\d}{\mathrm{d}} %
\newcommand{\M}{\mathcal{M}}
\newcommand{\jover}[2]{\overset{\text{(#1)}}{#2}} %
\newcommand{\marg}[2]{#1|_#2} %

\newcommand{\sfm}[1][\P_0]{\Psi_{#1}} %

\newcommand{\mP}{\mathcal{P}_1}
\newcommand{\mC}{\mathcal C}

\newtheorem{theorem}{Theorem}
\newtheorem{lemma}[theorem]{Lemma}

\newtheorem{remark}[theorem]{Remark}
\newtheorem{corollary}[theorem]{Corollary}

\newtheorem{assumption}[theorem]{Assumption}

\usepackage{lastpage}
\jmlrheading{xx}{2026}{1-\pageref{LastPage}}{7/26; Revised xx}{xx}{21-0000}{Jose Cribeiro-Ramallo and Florian Kalinke and Zoltán Szabó}

\ShortHeadings{Minimax Lower Bound of Kernel Discrepancies}{Cribeiro-Ramallo, Kalinke, and Szabó}
\firstpageno{1}

\begin{document}
\renewcommand{\theequation}{\thesection.\arabic{equation}} %

\title{Minimax Lower Bounds of Kernel Discrepancy Estimation: MMD, HSIC, KSD}

\author{\name Jose Cribeiro-Ramallo \email jose.cribeiro@kit.edu \\
       \addr Chair of Information Systems\\
       Karlsruhe Institute of Technology\\
       Am Fasanengarten 5, 76131 Karlsruhe, Germany
       \AND
       \name Florian Kalinke \email florian.kalinke@kit.edu\\
      \addr Chair of Information Systems\\
       Karlsruhe Institute of Technology\\
       Am Fasanengarten 5, 76131 Karlsruhe, Germany
       \AND
       \name Zoltán Szabó \email z.szabo@lse.ac.uk \\
       \addr Department of Statistics\\ London School of Economics\\
       Houghton Street, London, WC2A 2AE, UK}

\editor{My editor}

\maketitle

\begin{abstract}%
Over the past 20 years, kernel discrepancies have been leveraged as a highly powerful tool for quantifying the disagreement of distributions, with numerous successful applications in two-sample, goodness-of-fit, and independence testing, among others.
Their fastest estimators are known to converge at a parametric rate---$n^{-1/2}$---under mild conditions. While this rate is known to be minimax optimal on $\mathbb R^d$ under strict assumptions with bounded kernels, little is known about its optimality beyond the finite-dimensional Euclidean setting with unbounded kernels.
In this work, we prove that the minimax lower bound of estimation of the most popular kernel discrepancies (maximum mean discrepancy, Hilbert-Schmidt independence criterion and kernel Stein discrepancy; MMD, HSIC, KSD) is $n^{-1/2}$ on general topological spaces, and under mild assumptions on the kernel; the same rates are shown (as corollaries) to hold for the estimation of the mean embedding and the centered cross-covariance operator.
Our results settle the question of optimal estimation of these kernel discrepancies.
\end{abstract}

\begin{keywords}
maximum mean discrepancy, Hilbert-Schmidt independence criterion, kernel Stein discrepancy, minimax lower bound, reproducing kernel Hilbert space, perturbations
\end{keywords}

\section{Introduction} \label{sec:intro}
Kernel discrepancies appeared over 20 years ago in order to tackle the problem of quantifying the similarity of distributions, relying on the representation power and computational tractability of reproducing kernel Hilbert spaces (RKHS; \citealt{aronszajn50theory}). Kernel functions~\citep{wahba90spline,atteia92hilbertian,steinwart08support,saitoh16theory} have been designed and studied on a wide variety of domains, rendering these discrepancies broadly applicable; examples include sequences~\citep{watkins99dynamic2,lodhi02text,kiraly19kernel,chevyrev22signature, toth25random}, probability distributions~\citep{berlinet04reproducing,hein05hilbertian,smola07hilbert,sriperumbudur10hilbert,marienwald21high}, sets~\citep{haussler99convolution,gartner02multi}, rankings~\citep{jiao16kendall, mania18kernel}, fuzzy domains~\citep{guevara17cross}, graphs~\citep{kondor02diffusion,borgward05shortestpath,kondor16multiscale2,borgwardt20graph,schulz22graph,nikolentzos23graph}, and Abelian groups~\citep{steinwart21strictly}.

Most kernel discrepancies rely on the kernel mean embedding \citep{berlinet04reproducing,smola07hilbert}, a lossless (under mild conditions) map of Borel probability measures into an RKHS. This lossless property holds by definition if and only if (iff) the mean embedding is injective; in this case the kernel is called characteristic~\citep{fukumizu08kernel2,sriperumbudur10hilbert}.
Considering the RKHS distance between two mean embeddings results in the so-called maximum mean discrepancy (MMD; \citealt{gretton12kernel}),
known to be equivalent \citep{sejdinovic13equivalence} to energy distance \citep{baringhaus04new,szekely04testing,szekely05new}---also known as $N$-distance; \citep{zinger92characterization,klebanov05ndistance}. It is also a specific instance of integral probability metrics (IPM; \citealt{zolotarev83probability,muller97integral}). MMD is a metric on the space of Borel probability measures provided that the kernel is characteristic.  If  MMD is applied to a product space---with a product kernel---to quantify the discrepancy of a  joint distribution and the product of its marginals, one gets the Hilbert-Schmidt independence criterion (HSIC). HSIC was originally designed for $M=2$ components \citep{gretton05measuring,gretton05kernel}, and later extended to $M\geq2$ \citep{quadrianto09kernelized2,sejdinovic13kernel2,pfister18kernel}. HSIC is a valid independence measure for $M=2$ if the kernel components are all characteristic \citep{lyons13distance}; requiring $c_0$-universality suffices for $M>2$ \citep{szabo18characteristic2}.
When applying MMD to a target and a sampling distribution, and selecting the kernel such that the mean embedding of the target vanishes (via a Stein operator; \citealt{stein72bound,chen21stein,anastasiou23stein}), one obtains the kernel Stein discrepancy (KSD; \citealt{chwialkowski16kernel,liu16kernelized}). KSD is a valid goodness-of-fit (GoF) measure given that the mean embedding of the target distribution is unique; $c_0$-universality~\citep{carmeli10vector,sriperumbudur10hilbert} of the kernel is known to be a sufficient condition for this property on $\R^d$~\citep[Theorem~2.2]{chwialkowski16kernel}.

The computational tractability and broad applicability of kernel discrepancies led to their widespread use and their extensions to other domains. Applications include, in case of MMD,
testing~\citep{gretton12kernel,fernandez19maximum,liu20learning,wynne22kernel,chevyrev22signature,schrab23mmd,zhou25dual,chatalic26scalable}, generative modeling~\citep{dziugaite15mmdgan,li15generative,sutherland17generative,li17mmd,zhou25inductive}, quantization~\citep{arbel19maximum,gladin24interaction,belhadji26weighted},
subspace selection~\citep{cribeiro25adversarial},
robust regression~\citep{alquier24universal},
learning world models~\citep{balestriero25lejepa,zimmermann25kerjepa},
copula estimation~\citep{alquier23estimation},
model selection~\citep{brueck25distribution},
and online change detection~\citep{li15mstatistic,li19scanbstatistics,kalinke25maximum,kalinke25optimal}; HSIC sees use for
independence testing~\citep{gretton08kernel2,chwialkowski14kernel,wehbe15nonparametric,bilodeau17tests,pfister18kernel,albert22adaptive,shekhar23permutation,podkopaev23sequential,ren24efficiently,ren25regression,zhou25dual},
feature selection~\citep{song12feature},
object detection~\citep{zhang26robust},
representation learning~\citep{miklautz25hsplid},
clustering~\citep{song07dependence,gonzalez19block},
and causal discovery~\citep{mooij16distinguishing,chakraborty19distance,kalinke23nystrommhsic}; lastly, KSD has been popularized in GoF testing~\citep{liu16kernelized,chwialkowski16kernel,fernandez20kernelized,schrab22ksdagg,baum23kernel,martinez25sequential,kalinke25nystromksd,kalinke26nystromksdjmlr,hagrass26stein},
model comparison~\citep{lim19kernel,kanagawa23kernel},
distribution compression~\citep{li24debiased},
model validation~\citep{gorham17measuring,futami19bayesian,hodgkinson21reproducing,wang23stein},
test validation~\citep{liu25robustness},
learning world models~\citep{zimmermann25kerjepa},
learning variational models~\citep{liu16stein,liu18stein,chen18stein,chen19stein,korba20svgd,korba21ksddescent},
and model explainability~\citep{sarvmaili25explaining}. The flexibility of kernel functions allows these KSDs to be applied on a wide variety of domains; examples include discrete spaces \citep{yang18discrete}, Riemannian manifolds \citep{xu20stein,xu21interpretable,barp22riemannstein,cheng24kernel}, Hilbert spaces \citep{wynne22kernel,zhang24fast,wynne24spectral,wynne25statistical}, point-processes \citep{chwialkowski14kernel,yang19point,kidger19deep,laumann21kernel,chevyrev22signature}, and graph data \citep{xu21gofgraph}.

Many estimators for these discrepancies exist; classical ones are based on U- and V-statistics~\citep{gretton05kernel,quadrianto09kernelized2,gretton12kernel,liu16kernelized,chwialkowski16kernel,pfister18kernel}; those for HSIC and KSD are known to converge at a rate of $n^{-1/2}$, while the rate is $n^{-1/2} +m^{-1/2}$ in the case of the MMD.
These convergence rates are achieved in any topological space with bounded~\citep{smola07hilbert} and unbounded kernels (under subexponentiality; \citealt{kalinke25nystromksd,kalinke26nystromksdjmlr}).
Further, several accelerated versions exists with the same $n^{-1/2}$-rate~\citep{chatalic22nystrom,kalinke23nystrommhsic,kalinke25nystromksd}.

Lower bounds for MMD and HSIC estimation with the same $(n^{-1/2} + m^{-1/2})$- and $n^{-1/2}$-rates are only known on $\R^d$ for radial and translation-invariant kernels \citep{tolstikhin16minimax2,kalinke24minimax} and on $\R$ ($d=1$) for the exponential kernel \citep{chamakh24keep}, and are not trivially extendable beyond the finite-dimensional Euclidean setting. The question of whether there exist matching lower bounds %
beyond $\R^d$ under mild assumptions on the kernels remains open. This question, to which we give an affirmative answer, is the main focus of our manuscript. In particular, we make the following \tb{contributions}:
\begin{enumerate}[label=(\roman*)]
    \item We establish the minimax lower bound of $n^{-1/2} + m^{-1/2}$, $n^{-1/2}$, and $n^{-1/2}$ for the estimation of MMD, HSIC, and KSD (respectively) on general topological spaces with unbounded kernels. These lower bounds match the upper ones of available estimators, hence settle their minimax optimality.
    \item Our results imply the same minimax lower bound of $n^{-1/2}$ for the estimation of the mean embedding and the centered cross-covariance operator.
\end{enumerate}
This article expands the work of \citet{cribeiro26minimax}, which settled the minimax lower bound of KSD estimation on general topological spaces (recalled in Theorem~\ref{th:ksd}, with proof for completeness), to MMD~(Theorem~\ref{th:mmd}), mean embedding~(Corollary~\ref{cor:mean_emb}), HSIC~(Theorem~\ref{th:hsic}), and centered cross-covariance operator estimation (Corollary~\ref{cor:cross_cov}), under similar mild conditions.

The paper is structured as follows. Notations are introduced in Section~\ref{sec:notations}. Section~\ref{sec:results} is dedicated to our main results on the minimax lower bound of MMD, HSIC and KSD estimation on topological spaces, with corollaries on the estimation of the mean embedding and the centered cross-covariance operator. Our proofs are gathered in Appendix~\ref{sec:proofs}, with auxiliary and external results collected in Appendix~\ref{sec:aux} and Appendix~\ref{sec:ext}, respectively.

\section{Notations}\label{sec:notations}

In this section we introduce the following notations: $\N$, $\Np$, $[d]$, $\times_{i=1}^d\X_i$, $\X^d$, $\overline{S}$, $f(A)$, $f^{-1}(B)$, $\mC(\X)$, $\mC_b(\X)$, $\mathcal L_1(\X,\nu)$,
$\mathcal O(b_n)$, $\Omega(b_n)$, $\Theta(b_n)$, $\mathcal B(\X)$, $\M_b(\X)$, $\M_1^+(\X)$, $\supp(\P)$, $\delta_{x}$, $X_{1:n}$, $\P^n$, $\tfrac{\d\Q}{\d\P}$, $\KL(\cdot\|\cdot)$, $\otimes_{i=1}^d\P_i$,  $\O_\P(a_n)$, $\E_\P$, $B(\H)$, $K$, $K(\cdot,x)$, $\H_K$, $\norm{\cdot}{\H_K}$, $\mu_K(\cdot)$, $\mP(K;\X)$, $\MMD(\cdot,\cdot)$, $\marg\P i$, $\otimes_{i=1}^d\H_{K_i}$, $\otimes_{i=1}^dK_i$, $\otimes_{i=1}^d f_i$, $\HSIC(\cdot)$, $C_K(\cdot)$, $\sfm(\cdot)$, $T_{\P_0}(\cdot)$, $K_{\P_0}$, $\mathcal T$, $\KSD(\cdot,\cdot)$.

\paragraph{General:}
Denote by $\N=\{0,1,2,\dots\}$ the set of  natural numbers. For a positive integer $d\in\Np=\{1,2,\dots\}$, let $[d]=\{1,2,\dots,d\}$. For a sequence of sets $(\X_i)_{i=1}^d$, we denote their Cartesian product by $\times_{i=1}^d\X_i$; when $\X_i=\X$ for all $i\in[d]$, we write $\X^d$. Let $(\X,\tau_{\X})$ be a topological space. For a set $S\subseteq\X$ its closure $\overline S$ is defined as the smallest closed subset of $\X$ containing $S$. A set $S\subseteq\X$ is called dense in $\X$ if $\overline S = \X$;
$\X$ is called separable if it has a countable dense subset. Given a map $f:\X\to\Y$, the image of $A\subseteq \X$ is $f(A)=\{y\in\Y:\,y=f(x),\, x\in A \}$; the pre-image of $B\subseteq \Y$ is $f^{-1}(B) = \{x\in\X:\, f(x) \in B\}$.
The spaces of continuous and continuous bounded real-valued functions on $\X$ are denoted by $\mC(\X)$ and $\mC_b(\X)$, respectively.
For a measure space $(\X,\Sigma,\nu)$, $\mathcal L_1(\X,\nu)$ is the space of real-valued functions on $\X$ which are finitely integrable with respect to the measure $\nu$.
For positive sequences $(a_n)_{n=1}^\infty$ and  $\left(b_n\right)_{n=1}^\infty$, (i) $a_n = \O(b_n)$ if there exist $C>0$ and $n_0\in\Np$ such that $a_n \le C b_n$ for all $n\ge n_0$,  (ii) $a_n = \Omega(b_n)$ if $b_n = \O(a_n)$, an (iii) $a_n = \Theta(b_n)$ if $a_n = \O(b_n)$ and $b_n = \O(a_n)$.

\paragraph{Probability:} Let $(\X, \tau_{\X})$ be a topological space, which we assume throughout the paper to be enriched with its Borel sigma-algebra $\mathcal B(\X)\coloneq\mathcal B(\X,\tau_{\X})$; the space of finite signed (resp.\ probability) measures on $(\X,\mathcal B(\X))$ is denoted by $\M_b(\X)$ (resp.\ $\M_1^+(\X)$).
For $\P\in\M_1^+(\X)$, its support $\supp(\P)$ is defined as the smallest closed set whose complement has null $\P$-measure: a closed set $C=\supp(\P)$ if (i) $\P(\X\setminus C)=0$ and (ii) for any closed set $C_1\subseteq \X$ for which $\P(\X\setminus C_1)=0$, $C\subseteq C_1$ holds.\footnote{Note that the support of a Borel measure exists \citep[Proposition 2.3]{kozarzewski18existence} if the topology of $\X$ has a countable basis (in other words, $\X$ is second countable); in metric spaces, the notion of second countability and separability coincide \citep[Proposition 2.1.4]{dudley04real}. \label{foot:2ndcount}}
For a point $x\in \X$, the Dirac measure $\delta_x\in\M_1^+(\X)$ is defined as
\begin{align}
    \delta_x(A) =\begin{cases} 1 &\text{ if } x\in A,\\
    0&\text{ if } x\not \in A,\end{cases} \quad \text{ for }  A\in\mathcal B(\X).
\end{align}
An independent and identically distributed (i.i.d.) sample from $\P \in\mathcal M_1^+(\X)$ is denoted by $X_{1:n} = (X_1,\dots,X_n)$ (shortly, $X_{1:n}\sim \P^n$, with $\P^n$ standing for the $n$-fold product of $\P$). Let $\Q,\P\in\mathcal M_{1}^+(\X)$, and let $\Q$ be absolutely continuous w.r.t.\ $\P$ ($\Q\ll \P$, with Radon-Nikodym derivative denoted by $\tfrac{\d \Q}{\d \P}$); the Kullback–Leibler divergence of $\Q$ and $\P$ is defined as $\KL(\Q\|\P) =\int_{\X}\ln \!\left(\tfrac{\d \Q}{\d \P}(x)\right)\d \Q(x)$.
Given topological spaces $(\X_i,\tau_{\X_i})_{i=1}^{d}$, their product $\times_{i=1}^d \X_i$ is assumed
throughout the manuscript to be endowed with the product topology; when $[d]$ is finite (as in our case) this topology is known \citep[Proposition~10.10]{sutherland09introduction} to coincide with the box topology $\tau = \{\times_{i=1}^dU_i\subseteq \times_{i=1}^d\X_i:~ U_i\in\tau_{\X_i}\}$.
The product of probability measures $\P_i\in\M_1^+(\X_i)$ ($i\in[d]$) is denoted by $\otimes_{i=1}^d\P_i\in\M_1^+(\times_{i=1}^d\X_i)$. For a sequence of i.i.d.\ real-valued random variables $(X_n)_{n=1}^\infty$, each having law $\P$, and a sequence of positive reals $(a_n)_{n=1}^\infty$, $X_n=\O_\P(a_n)$ means that $\left(\frac{X_n}{a_n}\right)_{n=1}^\infty$ is bounded in probability. Let $\H$ be a Hilbert space and $X \sim \P \in \mathcal M_{1}^+(\H)$. If $\int_\H \norm{x}{\H}\d\P(x) < \infty$, the expectation of $X$ is $\E_\P[X] \ =\ \int_{\H} x\d \P(x)$, where the integral is meant in Bochner's sense.

\paragraph{(Reproducing kernel) Hilbert space:}
Let the unit ball of a Hilbert space $\H$ be denoted by $B(\H)=\{h\in\H\,:\,\left\|h\right\|_\H \le 1\}$.
A function $K:\X^2\to\R$ is called a kernel if there exists a Hilbert space $\H$ and a feature map $\Phi:\X\to\H$ such that $K(x,x') = \fip{\Phi(x),\Phi(x')}{\H}$ for all $x,x'\in\X$. A Hilbert space $\H_K$ of $\X \to \R$ functions is called reproducing kernel Hilbert space (RKHS) associated to a kernel $K:\X^2\to \R$ if $K(\cdot,x)\in \H_{K}$ and $\fip{K(\cdot,x),f}{\H_{K}}=f(x)$ for all $x\in\X$ and $f\in\H_K$; $K(\cdot,x)$ denotes the function $x'\mapsto K(x',x)$ ($x\in\X$) and it is called the canonical feature map (of $K$); $K$ is called the reproducing kernel of $\H_K$. A kernel is equivalent to a reproducing kernel, and the correspondence of kernels and RKHSs is one to one. The norm in $\H_K$ is denoted by $\norm{f}{\H_K} = \sqrt{\fip{f,f}{\H_K}}$ ($f \in \H_K$).

\paragraph{Maximum mean discrepancy:}
Let $K:\X^2\to\R$ be a Borel-measurable kernel; we consider all kernels to be Borel-measurable in the manuscript. The mean embedding w.r.t.\ $K$ is defined as
\begin{align}
    \mu_K: \mP(K;\X) &\to \H_K, && \P \mapsto  \int_\X K(\cdot, x)\d \P(x),
\end{align}
with
\begin{align}
\mP(K;\X) &= \Bigg\{\P\in \M_1^+(\X):~\int_\X \norm{K(\cdot,x)}{\H_K}\d\P(x) =\int_\X \sqrt{K(x,x)} \d \P(x)<\infty\Bigg\}\subseteq\M_1^+(\X).
\end{align}
$K$ is called characteristic to $\mP(K;\X)$ if $\mu_K$ is injective (\citealt[page~2270]{sejdinovic13equivalence}; \citealt[Definition~1]{simon-gabriel18kernel}).\footnote{
 If $K$ is bounded (in other words, $\sup_{x,x'\in \X}K(x,x') <\infty$), one has that $\mP(K;\X)=\M_1^+(\X)$, and hence one gets back the (classical) characteristic property of kernels \citep{fukumizu08kernel2,sriperumbudur10hilbert}. Continuous characteristic kernels are known \citep[Corollary~3.18]{steinwart21strictly} to exist on arbitrary compact metrizable topological space.}
 The maximum mean discrepancy (MMD) of $\P,\Q\in \mP(K;\X)$ is defined as \begin{align}
 \MMD(\P,\Q) =\norm{\mu_K(\P)-\mu_K(\Q)}{\H_K}.
 \end{align}
\paragraph{Hilbert-Schmidt independence criterion:} For $i\in[d]$, let  $(\X_i,\tau_{\X_i})$ be topological spaces and, for the remainder of this section, $\X = \times_{i=1}^d \X_i$.
For a probability measure $\P\in\M_1^+(\X)$, its $i$-th marginal $\marg{\P}{i}\in\M_1^+(\X_i)$ ($i\in[d]$) is
 \begin{align*}
\P|_i(A_i) & =  \int_{\X_1\times\cdots \times \X_{i-1}\times A_i\times \X_{i+1}\times \cdots \times \X_d}1\ \d\P(x) \text{ for any } A_i\in\mathcal B(\X_i).
 \end{align*}
Assume that each topological space $\X_i$ is equipped with a kernel $K_i:\X_i^2\to\R$ with associated RKHS $\H_{K_i}$ ($i\in[d])$. The tensor product of $(\H_{K_i})_{i=1}^d$ is denoted by $\otimes_{i=1}^d\H_{K_i}$; it is an RKHS with the product kernel $K=\otimes_{i=1}^d K_i:\X^2\to\R$ defined as $ K(x,y) = \prod_{i=1}^dK_i(x_i,y_i)$ for all $x=(x_i)_{i=1}^d\in\X$ and $y=(y_i)_{i=1}^d\in\X$ \citep[Theorem~13]{berlinet04reproducing}. The kernel $K$ has the canonical feature map $K(\cdot,(x_i)_{i=1}^d)= \otimes_{i=1}^dK_{i}(\cdot,x_i)\in\otimes_{i=1}^d\H_{K_i}=\H_K$, where $\otimes_{i=1}^df_i: \X\to\R$ denotes the tensor product of $f_i\in \H_{K_i}$, defined as $(x_i)_{i=1}^d\mapsto \prod_{i=1}^d f_i(x_i)$.

The Hilbert-Schmidt independence criterion (HSIC) for $K = \otimes_{i=1}^d K_i$ on $\X$ and for $\P \in \mP(K; \X)$ is defined as
\begin{align}
\HSIC(\P) &= \MMD\!\big(\P,\otimes_{i=1}^d\marg\P i) = \norm{C_K(\P)}{\H_K}, \label{eq:hsic-definition}\\
C_K(\P) & =\mu_K(\P) - \mu_K\big(\otimes_{i=1}^d\marg\P i\big)=\mu_K(\P) - \otimes_{i=1}^d \mu_{K_i}(\marg\P i)\in\H_K \label{eq:cov-op-definition},
\end{align}
where $C_{K}(\P)$ is the centered cross-covariance operator. $K$ is called $\mathcal I$-characteristic (characteristic to independence; \citealt{szabo18characteristic2}) if $\HSIC(\P)=0$ iff $\P=\otimes_{i=1}^d \marg{\P}{i}$ holds for all $\P\in \mP(K;\X)$.

\paragraph{Kernel Stein discrepancy:}
Let $(\X,\tau_\X)$ be a topological space, $\P_0\in\M_1^+(\X)$, $\H$ a Hilbert space of functions on $\X$, and $\sfm:\X\to\H$ a measurable map satisfying   \begin{align}
\E_{\P_0} [\sfm(X)] &= 0. \label{eq:sfm=0}
\end{align}
One can define the \citeauthor{stein72bound} operator $T_{\P_0}$ on $\H$ as $(T_{\P_0}f)(x) = \fip{\sfm(x),f}{\H}$, with $f\in\H$ and $x\in\X$; the operator inherits the mean-zero property
\begin{align}
	\E_{\P_0}\!\left[ \left( T_{\P_0} f\right)(X) \right] &= \fip{\E_{\P_0} [\sfm(X)],f}{\H} = 0,
\end{align}
which can be seen by interchanging the inner product with the expectation and  using that $\E_{\P_0}[\sfm(X)] = 0$.
We define the Stein kernel associated to $\sfm$ as
\begin{align}
K_{\P_0}(x,x') = \fip{\sfm(x), \sfm(x')}{\H}
\end{align}
for all $(x,x')\in\X^2$. As $K_{\P_0}$ is a kernel, there exists an associated RKHS $\H_{K_{\P_0}}$ for which $K_{\P_0}$ is the reproducing kernel; it has the canonical feature map $x\mapsto K_{\P_0}(\cdot,x) \in \H_{K_{\P_0}}$ ($x\in\X$).\footnote{Note that $K_{\P_0}(x,x') = \fip{\sfm(x),\sfm(x')}{\H} = \fip{K_{\P_0}(\cdot,x),K_{\P_0}(\cdot,x')}{\H_{K_{\P_0}}}$; while $\sfm(x)\in\H$ and $K_{\P_0}(\cdot,x)\in\H_{K_{\P_0}}$ ($x\in\X$), both yield the same Stein kernel $K_{\P_0}$.} Denote by $\mathcal T$ the set of probability measures $\P_0\in\M_1^+(\X)$ for which \eqref{eq:sfm=0} can be satisfied and $\H_{K_{\P_0}}$ is separable. The measurability of $\sfm$ is sufficient to guarantee the measurability of $K_{\P_0}$ and $K_{\P_0}(\cdot,x)$  ($x\in \X$) by the separability of $\H_{{K_{\P_0}}}$ \citep[Lemma~4.25]{steinwart08support}; also, $\E_{\P_0}[\sfm(X)] =0$ implies \citep[(15)-(16)]{cribeiro26minimax} that \begin{align}\E_{\P_0}[{K_{\P_0}}(\cdot,X)]=0. \label{eq:Ep_0K_0=0}\end{align}
The kernel Stein discrepancy (KSD; \citealt{hagrass26stein,cribeiro26minimax}) of the target $\P_0$ and the sampling distribution $\P$ is defined as the IPM
\begin{align}
    \KSD(\P_0,\P) &=\sup_{f\in B(\H)}\big|\E_{\P_0}[(T_{\P_0} f)(X)]- \E_{\P}[(T_{\P_0} f)(X)]\big| \\
    &= \norm{\E_{\P}[\sfm(X)]}{\H} = \norm{\E_{\P}[K_{\P_0}(\cdot,X)]}{\H_{K_{\P_0}}},\label{eq:ksd_def}
\end{align}
for $\P_0 \in \mathcal T$ and $\P \in \mP(K_{\P_0},\X)$, where the second equality comes from \citep[(15)-(16)]{cribeiro26minimax}.\footnote{Note that this definition generalizes the well-known Langevin-Stein KSD in $\R^d$ \citep{chwialkowski16kernel,liu16kernelized,oates17control,gorham17measuring}; see \citet[Example~1.1]{hagrass26stein}. KSDs can also be defined using the Pettis integral~\citep{barp24targeted}, which, for simplicity, we do not consider in this paper.}
We say that $K_{\P_0}$ is characteristic to $\mP(K_{\P_0},\X)$ with regard to $\P_0$ if $\KSD(\P_0,\P) = 0$ iff $\P_0 = \P$ holds for all $\P \in \mP(K_{\P_0};\X)$.

\section{Results}\label{sec:results}
This section is dedicated to our results: The minimax lower bound for the estimation of $\MMD$ (Theorem~\ref{th:mmd}), $\HSIC$ (Theorem~\ref{th:hsic}) and $\KSD$ (Theorem~\ref{th:ksd}) on topological spaces under mild assumptions. Further, Theorem~\ref{th:mmd} and Theorem~\ref{th:hsic} allow one to obtain a lower bound for the estimation of the mean embedding (Corollary~\ref{cor:mean_emb}) and the centered cross-covariance operator (Corollary~\ref{cor:cross_cov}), respectively.

Our results fall under the umbrella of minimax estimation \citep{tsybakov09introduction} which we recall in the following. Let $\hat F_n =\hat F_n(X_{1:n})$ be an estimator based on $n$ i.i.d. samples from $\P$ of a functional $F=F(\P)$. A positive sequence $(a_n)_{n=1}^{\infty}$ is said to be a lower bound of $F$ estimation with regard to a class $\mathcal P$ of Borel probability measures on $(\X,\tau_{\X})$ if there exists a constant $B>0$ and threshold $n_0 \in \Np$ such that for all $n\ge n_0$, one has
\begin{align}
    \inf_{\hat F_n}\sup_{\P\in\mathcal P} \P^n\Big(\big|F-\hat F_n\big|\geq a_nB\Big)>0. \label{eq:mlb}
\end{align}
If some specific estimator $\hat F_n^*$ of $F$ has an upper bound in probability that matches $(a_n)_{n=1}^{\infty}$ up to constants---that is, $|F-\hat F_n^
*|=\mathcal O_{\P}(a_n)$---we say that $\hat F_n^*$ is minimax optimal w.r.t.\ $\mathcal P$.

We use Le Cam's two-point method (\citealt{lecam73convergence}; recalled in Theorem~\ref{th:le-cam}) to obtain bounds as in~\eqref{eq:mlb}; estimators for the studied
discrepancies with $\sqrt n$-consistency were recalled in Section~\ref{sec:intro}.
The core idea of this technique is to reduce the problem of finding a lower bound over a large class of distributions $\P \in  \mathcal P$ to the problem of finding a carefully crafted adversarial sequence of distributions; the key technical challenge and contribution of our work is the construction of this adversarial sequence. In order to apply Le Cam's method one must show that there exists $\alpha >0$ and $n_0\in\Np$ such that for all $n\geq n_0$ there is an adversarial pair of distributions $(\P_1,\P_2)=(\P_1(n),\P_2(n))\in \mathcal P^2$ and $s_n >0$ for which
\begin{enumerate}
    \item $\KL(\P_1^n\|\P_2^n)\le\alpha$: their $n$-fold products are similar in KL sense, while \label{en:2.}
   \item $|F(\P_1)-F(\P_2)|\geq 2s_n$: the target functional $F$ for $\P_1$ and $\P_2$ is dissimilar. \label{en:1.}
\end{enumerate}
In this case, $\inf_{\hat F_n}\sup_{\P\in\mathcal P}\P^n\Big(\big|F-\hat F_n\big|\geq s_n\Big)\geq \max\Big(\frac{e^{-\alpha}}{4},\frac{1-\sqrt{\alpha/2}}{2}\Big)$ for all $n\geq n_0$. Hence, to establish the minimax optimality of existing estimators of $\MMD$, $\HSIC$ and $\KSD$, it is sufficient to find an adversarial pair for which \ref{en:2.}.\ holds for some $\alpha>0$ and satisfy \ref{en:1.}.\ with $s_n=\Theta( n^{-1/2})$.
Since we seek to derive these bounds for any topological space $(\X,\tau_\X)$ we can not select any specific distribution (such as Gaussian). Hence, we will employ general perturbation functions $\varphi\in\mC_b(\X)$ with carefully selected properties to construct our adversarial pairs; the assumptions imposed on $(\X,\tau_\X)$ guarantee the existence of such perturbations in each case.

Our proofs rely on Le Cam's method paired with perturbed measures. The latter construction has found various successful applications in  machine learning and statistics \citep{anderson94twosample,sriperumbudur10hilbert,gretton12kernel,balasubramanian21optimality,hagrass26stein}; in particular in the context of hypothesis testing. In this work we consider perturbed measures for the estimation of kernel-based information theoretical measures. Our lower bounds apply to general topological spaces, on which we also establish the existence of the required perturbations.

We now state our main results.

\subsection{Maximum Mean Discrepancy}

Let $(\X,\tau_{\X})$ be a topological space, $K:\X^2 \to \R$ a kernel on $\X$ with associated RKHS $\H_K$. We impose the following assumption.
\begin{assumption}[]\label{as:mmd}
    Assume that (i) $K$ is characteristic to $\mP(K;\X)$ and (ii) there exists a pair $(\P_0, \varphi_0)\in\mP(K;\X) \times \mathcal C_b(\X)$ such that there is no $c\in\R$ for which $\varphi_0=c$ holds $\P_0$-almost surely (a.s.).
\end{assumption}

Assumption~\ref{as:mmd}(i) is commonly used in the literature~\citep{sejdinovic13equivalence,simon-gabriel18kernel};
the next lemma shows that Assumption~\ref{as:mmd}(ii) holds under mild conditions.
\begin{lemma}[Sufficient conditions for Assumption~\ref{as:mmd}(ii)]\label{lem:sufficient_cond}
Let $(\X,\tau_{\X})$ be a metric separable topological space and $K\in\mC_b(\X^2)$. Assume that $\H_K$ has a non-constant element. Then, there exists $(\P_0,\varphi_{0})\in\mP(K;\X)\times\mC_b(\X)$ satisfying Assumption~\ref{as:mmd}(ii).
\end{lemma}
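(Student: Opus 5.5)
Since $\H_K$ contains a non-constant element $h$, there exist $x_1,x_2\in\X$ with $h(x_1)\neq h(x_2)$. The plan is to take $\P_0=\tfrac12(\delta_{x_1}+\delta_{x_2})$ and to construct $\varphi_0\in\mC_b(\X)$ with $\varphi_0(x_1)\neq\varphi_0(x_2)$. For such a pair, any constant $c$ would have to equal both $\varphi_0(x_1)$ and $\varphi_0(x_2)$ on sets of $\P_0$-mass $\tfrac12$ each, which is impossible. Assumption~\ref{as:mmd}(ii) therefore follows once membership $\P_0\in\mP(K;\X)$ and the existence of $\varphi_0$ are checked.

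\textbf{Step 1: $\P_0\in\mP(K;\X)$.} Since $K\in\mC_b(\X^2)$, we have $\int_\X\sqrt{K(x,x)}\,\d\P_0(x)=\tfrac12\big(\sqrt{K(x_1,x_1)}+\sqrt{K(x_2,x_2)}\big)<\infty$. Boundedness is actually superfluous here, since the integral is a finite sum. Because $K$ is continuous and bounded, every $\P\in\M_1^+(\X)$ lies in $\mP(K;\X)$ anyway.

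\textbf{Step 2: $x_1\neq x_2$ and a separating $\varphi_0$.} From $h(x_1)\neq h(x_2)$ we get $x_1\neq x_2$. Let $\rho$ be the metric. I would set
\begin{align}
\varphi_0(x)=\min\{\rho(x,x_1),1\}.
\end{align}
This function is continuous (a composition of $1$-Lipschitz maps) and bounded by $1$, with $\varphi_0(x_1)=0$ and $\varphi_0(x_2)=\min\{\rho(x_1,x_2),1\}>0$. Alternatively, one could take $\varphi_0=h$ itself: $h$ is continuous because $K$ is continuous, since $|h(x)-h(y)|\le\|h\|_{\H_K}\sqrt{K(x,x)-2K(x,y)+K(y,y)}$. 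It is also bounded, via $|h(x)|\le\|h\|_{\H_K}\sqrt{\sup K}$. This second choice even avoids using metrizability.

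\textbf{Main obstacle.} There is hardly a real obstacle. The only care needed is that the support and measurability conventions hold, and singletons are Borel in a metric space, so $\delta_{x_i}$ are Borel probability measures. Separability is not needed for this construction. If one wants $\P_0$ with full support, one could instead mix with $\sum_k 2^{-k}\delta_{q_k}$ over a countable dense set $(q_k)$; continuity of $\varphi_0=h$ then ensures it is non-constant on the support, where separability would be used.
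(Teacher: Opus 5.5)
Your proof is correct, and it takes a different and more elementary route than the paper.

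\textbf{The paper's route.} The paper also sets $\varphi_0=h$, citing $\H_K\subseteq\mC_b(\X)$ for $K\in\mC_b(\X^2)$. For $\P_0$, however, it takes $\sum_i 2^{-i}\delta_{x_i}$ over a countable dense set and proves $\supp(\P_0)=\X$. This is where separability and metrizability enter: second countability is needed for the support to exist. It then applies Lemma~\ref{lem:phi-not-a.s.c.}: under a full-support measure, a continuous function taking two distinct values is not a.s.\ constant, because $\varphi_0^{-1}(\R\setminus\{c\})$ is a nonempty open set and therefore has positive mass.

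\textbf{Your route.} Your two-point measure $\tfrac12(\delta_{x_1}+\delta_{x_2})$ avoids all of this. The bound $\P_0(\{\varphi_0=c\})\le\tfrac12$ is immediate, and membership in $\mP(K;\X)$ reduces to a finite sum. With $\varphi_0=h$ you need neither separability nor metrizability. With $\varphi_0=\min\{\rho(\cdot,x_1),1\}$ you need neither continuity nor boundedness of $K$, only $|\X|\ge 2$. In effect your argument shows that Assumption~\ref{as:mmd}(ii) holds as soon as $\mC_b(\X)$ contains a non-constant function. The lemma's hypotheses are therefore much stronger than needed for the statement as posed, and the proofs of Theorem~\ref{th:mmd} and Corollary~\ref{cor:mean_emb} only use the existence of some such pair.

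\textbf{What the paper's route buys.} The full-support mechanism of Lemma~\ref{lem:phi-not-a.s.c.} still works when $\P_0$ is prescribed rather than chosen. An example is the KSD target in Assumption~\ref{as:ksd}, where (ii) must hold for the same $\P_0$ that satisfies (i), so you cannot simply pick a two-point measure. This is presumably why the paper proceeds that way.

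\textbf{A minor remark.} A Dirac measure is a probability measure on any $\sigma$-algebra, so you do not need singletons to be Borel. What matters is that $\{\varphi_0=c\}$ is Borel, and this follows from the continuity of $\varphi_0$.
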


In the following theorem, we establish the minimax lower bound of $\MMD$ estimation.
\begin{theorem}[Minimax lower bound of $\MMD$]\label{th:mmd}
    Let Assumption~\ref{as:mmd} hold. Then, there exists a constant $B>0$ such that \begin{align}
        \liminf_{n,m\rightarrow\infty} \inf_{\hat F_{n,m}} ~ \sup_{(\P,\Q)\in[\mP(K;\X)]^2}\left(\P^n\otimes\Q^m\right)\Big(\big|\MMD(\P,\Q)-\hat F_{n,m}\big|\geq B\big(m^{-1/2} + n^{-1/2}\big) \Big) >0,\label{eq:thm-mmd}
    \end{align}
    where $\hat F_{n,m} \coloneq \hat F_{n,m}(X_{1:n},Y_{1:m})$ denotes any estimator of $\MMD(\P,\Q)$ based on sample $(X_{1:n},Y_{1:m})\sim \P^n \otimes \Q^m$.
\end{theorem}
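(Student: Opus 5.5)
We will apply Le Cam's two-point method (Theorem~\ref{th:le-cam}) twice, once for each sample size. It suffices to show separately that the minimax probability is bounded below at rate $n^{-1/2}$ (perturbing $\P$ with $\Q$ fixed) and at rate $m^{-1/2}$ (perturbing $\Q$ with $\P$ fixed). Since $m^{-1/2}+n^{-1/2}\le 2\max(m^{-1/2},n^{-1/2})$, the bound with the larger of the two rates gives \eqref{eq:thm-mmd}, with $B$ halved.

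\textbf{Construction of the adversarial pair.} We use the pair $(\P_0,\varphi_0)$ from Assumption~\ref{as:mmd}(ii). First center and normalize: $\varphi=(\varphi_0-\E_{\P_0}\varphi_0)/\|\varphi_0\|_\infty'$, where the denominator is chosen so that $\|\varphi\|_\infty\le 1$. Then $\varphi\in\mC_b(\X)$, $\E_{\P_0}\varphi=0$, and $\varphi\neq0$ on a set of positive $\P_0$-measure, because $\varphi_0$ is not a.s.\ constant. For $|\theta|\le 1/2$ define $\d\P_\theta=(1+\theta\varphi)\,\d\P_0$. This is a probability measure, and since its density is bounded by $3/2$ we have $\P_\theta\in\mP(K;\X)$. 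For the $n$-part, take the two-sample pairs $(\P_0,\P_0)$ and $(\P_{\theta_n},\P_0)$ with $\theta_n=n^{-1/2}$; the pair for the $m$-part is the symmetric one.

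\textbf{KL bound.} Using $\ln(1+u)\le u$ and $\E_{\P_0}\varphi=0$,
\[
\KL(\P_{\theta}\|\P_0)\le\int(1+\theta\varphi)\theta\varphi\,\d\P_0=\theta^2\E_{\P_0}\varphi^2\le\theta^2 .
\]
Hence $\KL\big((\P_{\theta_n}^n\otimes\P_0^m)\|(\P_0^n\otimes\P_0^m)\big)=n\,\KL(\P_{\theta_n}\|\P_0)\le 1=:\alpha$.

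\textbf{Separation.} $\MMD(\P_0,\P_0)=0$, while by linearity
\[
\MMD(\P_{\theta_n},\P_0)=\theta_n\Big\|\int K(\cdot,x)\varphi(x)\,\d\P_0(x)\Big\|_{\H_K}=\theta_n c_\varphi .
\]
The key step, and the only place characteristicness enters, is to show $c_\varphi>0$. Suppose instead $c_\varphi=0$. Then $\mu_K(\P_{1/2})=\mu_K(\P_0)$, and injectivity of $\mu_K$ on $\mP(K;\X)$ (Assumption~\ref{as:mmd}(i)) forces $\P_{1/2}=\P_0$. That means $\varphi=0$ $\P_0$-a.s., contradicting the choice of $\varphi$. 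Therefore $|F(\text{pair}_1)-F(\text{pair}_2)|=c_\varphi n^{-1/2}=2s_n$, and Le Cam gives a lower bound of order $s_n=\tfrac{c_\varphi}{2}n^{-1/2}$ with probability at least $\max(e^{-1}/4,(1-\sqrt{1/2})/2)>0$, uniformly in $n$ and $m$. The same argument applies with the roles of $n$ and $m$ swapped.

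The main obstacle I expect is exactly this positivity of $c_\varphi$, that is, making sure the perturbation is actually visible in the RKHS. It is handled by the characteristic property combined with the nonconstancy in Assumption~\ref{as:mmd}(ii); centering alone would not suffice without that assumption. The remaining bookkeeping is routine: the Bochner integrability of $K(\cdot,x)\varphi(x)$ follows from $\P_0\in\mP(K;\X)$ and the boundedness of $\varphi$, and combining the two one-sided bounds into the $\liminf$ over $(n,m)$ is immediate because both bounds hold for all $n,m$ with constants independent of them.
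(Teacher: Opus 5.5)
Your proof is correct, but it takes a different route from the paper's.

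The paper uses a \emph{single} Le Cam pair. It compares $(\P_0,\P_0)$ with $(\P_{(n)},\Q_{(m)})$, where $\d\P_{(n)}/\d\P_0=1+\varepsilon_n\varphi$ and $\d\Q_{(m)}/\d\P_0=1-\nu_m\varphi$. Because the two perturbations go in \emph{opposite} directions, the embedding shifts add up to $\MMD(\P_{(n)},\Q_{(m)})=(\varepsilon_n+\nu_m)C_\varphi$. Meanwhile the KL divergence splits as $n\KL(\P_{(n)}\|\P_0)+m\KL(\Q_{(m)}\|\P_0)=\O(1)$. The rate $n^{-1/2}+m^{-1/2}$ therefore comes out of one application of Theorem~\ref{th:le-cam}.

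You instead run two one-sample Le Cam arguments, perturbing only $\P$ and then only $\Q$. You combine them via $n^{-1/2}+m^{-1/2}\le 2\max(n^{-1/2},m^{-1/2})$, picking for each $(n,m)$ the pair that matches the slower rate. This is valid: once $B(n^{-1/2}+m^{-1/2})\le s$, the event $\{|\hat F_{n,m}-\MMD(\P,\Q)|\ge s\}$ is contained in the event with the smaller threshold. It only costs a factor $2$ in $B$.

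What your route buys:
\begin{itemize}
\item It needs only the one-sided perturbation already used for Corollary~\ref{cor:mean_emb}, with no sign bookkeeping and no analogue of Lemma~\ref{cor:p_n-neq-q_m}.
\item Your proof that $c_\varphi>0$ uses a single fixed measure $\P_{1/2}$, independent of $n,m$. That is slightly cleaner than deducing $C_\varphi>0$ from $\P_{(n)}\neq\Q_{(m)}$.
\item Centering and normalizing $\varphi_0$ directly, with $\|\varphi\|_\infty\le 1$, replaces the appeal to Lemma~\ref{lem:phi-exists}. It gives explicit constants ($\alpha=1$, density at most $3/2$) where the paper relies on thresholds valid only for $n,m$ large enough.
\end{itemize}

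The paper's construction, in turn, gets the sum rate directly from one hypothesis pair, with constant $B=\min(c_1,c_2)C_\varphi/2$.

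One trivial caveat: $\theta_n=n^{-1/2}\le 1/2$ requires $n\ge 4$. This is harmless for the $\liminf$. In fact, with $\|\varphi\|_\infty\le 1$, any $\theta\in[0,1]$ already gives a probability measure in $\mP(K;\X)$, and your KL bound still holds.
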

We comment on this result after stating a direct corollary.

\begin{corollary}[Minimax lower bound of $\mu_K$]\label{cor:mean_emb}
Let Assumption~\ref{as:mmd} hold. Then, there exists a constant $B>0$ such that  \begin{align}
     \liminf_{n\to\infty} \inf_{\hat G_n} \sup_{\P\in\mP(K;\X)} \P^n\left(\norm{\mu_K(\P) - \hat G_n}{\H_K}\geq Bn^{-1/2}\right)>0,
\end{align}
where $\hat G_n\coloneq \hat G_n(X_{1:n})$ denotes any estimator of $\mu_K(\P)$ based on sample $X_{1:n} \sim \P^n$.
\end{corollary}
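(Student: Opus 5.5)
We derive Corollary~\ref{cor:mean_emb} from Theorem~\ref{th:mmd} by a reduction, specializing to $m$ large relative to $n$ (in effect, $\Q$ known exactly). The idea is that any estimator $\hat G_n$ of $\mu_K(\P)$ induces an MMD estimator, and by the reverse triangle inequality its error is no larger than that of $\hat G_n$. The cleanest route, however, is to rerun the Le Cam argument behind Theorem~\ref{th:mmd} directly for the functional $\mu_K(\P)$, since that proof presumably builds an adversarial pair from $(\P_0,\varphi_0)$.

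Concretely, we use the perturbation pair from Assumption~\ref{as:mmd}(ii). Set $\psi=\varphi_0-\E_{\P_0}[\varphi_0(X)]$; this is bounded, has $\P_0$-mean zero, and is not $\P_0$-a.s.\ zero. Define
\begin{align}
\d\P_{\pm}=(1\pm\delta_n\psi/\|\psi\|_\infty)\,\d\P_0, \qquad \delta_n=c\,n^{-1/2}, \quad c\in(0,1/2].
\end{align}
Both $\P_\pm$ are probability measures with densities bounded in $[1/2,3/2]$, so $\P_\pm\in\mP(K;\X)$ because $\P_0$ is.

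The Le Cam step is then a KL bound. With density ratio $r=(1+\delta_n\tilde\psi)/(1-\delta_n\tilde\psi)$, where $\tilde\psi=\psi/\|\psi\|_\infty$, the standard $\chi^2$ bound gives $\KL(\P_+\|\P_-)\le\chi^2(\P_+\|\P_-)\le C\delta_n^2$. Tensorizing, $\KL(\P_+^n\|\P_-^n)\le Cc^2=:\alpha$ uniformly in $n$.

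For the separation, note that $\mu_K(\P_+)-\mu_K(\P_-)=2\delta_n\mu_K(\tilde\psi\,\P_0)$, where $\tilde\psi\,\P_0$ is a finite signed measure of total mass zero. Hence
\begin{align}
\norm{\mu_K(\P_+)-\mu_K(\P_-)}{\H_K}=2\delta_n\,\gamma, \qquad \gamma=\norm{\textstyle\int K(\cdot,x)\tilde\psi(x)\,\d\P_0(x)}{\H_K}.
\end{align}
Le Cam's method (Theorem~\ref{th:le-cam}), applied in the metric $\norm{\cdot}{\H_K}$ with $s_n=\delta_n\gamma=c\gamma n^{-1/2}$, then yields the claim with $B=c\gamma$, provided $\gamma>0$.

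Showing $\gamma>0$ is the main obstacle, and it is where characteristicness enters. Suppose $\gamma=0$. Then $\mu_K(\P_+)=\mu_K(\P_-)$ with $\P_\pm\in\mP(K;\X)$, so Assumption~\ref{as:mmd}(i) forces $\P_+=\P_-$. That means $\tilde\psi=0$ $\P_0$-a.s., i.e.\ $\varphi_0$ is $\P_0$-a.s.\ constant, contradicting Assumption~\ref{as:mmd}(ii). The constant $\gamma$ does not depend on $n$, so the bound holds with a fixed $B>0$ for all $n$, giving a positive $\liminf$.

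Alternatively, one can cite Theorem~\ref{th:mmd} directly: fix $\Q=\P_0$ and note that an estimator of $\mu_K(\P)$ with error $o(n^{-1/2})$ would give an MMD estimator (using $m\to\infty$ fast) that beats the $n^{-1/2}$ rate. The direct argument above avoids the bookkeeping of the joint limit in $(n,m)$.
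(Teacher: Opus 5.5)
Your direct Le Cam argument is correct and is essentially the paper's proof. Both use a two-point construction with densities of the form $1+\varepsilon_n\varphi$ with respect to $\P_0$ for a centered bounded perturbation $\varphi$, a separation proportional to $\varepsilon_n\norm{\E_{\P_0}[K(\cdot,X)\varphi(X)]}{\H_K}$ whose constant is positive by the characteristic property, and a KL bound of order $n\varepsilon_n^2$, bounded uniformly in $n$. Your symmetric pair $\P_\pm$ (instead of $(\P_0,\P_{(n)})$), the $\chi^2$ bound (instead of $\ln(1+z)\le z$ in Lemma~\ref{lem:kl-bound-perturb}), and the explicit centering and sup-normalization of $\varphi_0$ (instead of invoking Lemma~\ref{lem:phi-exists} and restricting to $n\ge n_0$) are only cosmetic differences.

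The ``alternative'' reduction in your last paragraph is not valid as phrased. Theorem~\ref{th:mmd} takes the supremum over $\Q$ as well, so you cannot fix $\Q=\P_0$. A correct reduction would take $m=n$, apply the same $\hat G_n$ to both samples, and use the triangle inequality, but your direct argument makes it unnecessary.
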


The following remark elaborates both results.
\begin{remark}
    We recall related minimax lower bounds with their imposed assumptions in the following, emphasizing the considerably broader applicability of our results.
    \begin{itemize}
        \item \tb{Difference to known minimax lower bound of MMD.} \citet{tolstikhin16minimax2} consider $\X=\R^d$ and  continuous radial characteristic kernels. \citet[Theorem~4]{chamakh24keep} worked with the exponential kernel on $\X=\R$.
        \item \tb{Difference to known minimax lower bound of mean embedding}. \citet{tolstikhin17minimax} presented minimax lower bounds (in the $\H_K$-norm) for estimating the mean embedding assuming continuous translation-invariant characteristic kernels on $\X=\R^d$~\citep[Theorem~1 and Theorem~6]{tolstikhin17minimax}. %

    \end{itemize}
\end{remark}

\subsection{Hilbert-Schmidt Independence Criterion}

Fix $d\in \Np$. For each $i\in[d]$, let $(\X_i,\tau_{\X_i})$ be a topological space and $\H_{K_i}$ an RKHS of $\X_i\to \R$ functions with kernel $K_i : \X_i^2\to \R$. Denote the product space by $\X=\times_{i=1}^d\X_i$ and the tensor product RKHS by $\H_K=\otimes_{i=1}^d\H_{K_i}$; recall that $\H_K$ is an RKHS with the product kernel $K=\otimes_{i=1}^dK_i$. We impose the following assumption.

\begin{assumption}\label{as:hics}
        Assume that (i) $K$ is $\mathcal I$-characteristic and  (ii) for each $i\in[d]$, there exist a pair $(\P_i,\varphi_i)\in\mP(K_i;\X_i)\times\mC_b\big(\X_i\big)$  for which there is no $c_i\in\R$ for which $\varphi_i = c_i$ holds $\P_i$-a.s.

\end{assumption}
Note that Assumption~\ref{as:hics}(ii) holds as long as Assumption~\ref{as:mmd}(ii) holds for each $\X_i$ and $K_i$. Further, for $d=2$ the characteristic property of $K_i$-s ($i\in [2]$) is equivalent to the $\mathcal I$-characteristic property of $\otimes_{i=1}^2 K_i$. For $d>2$, $\otimes_{i=1}^d K_i$ is \emph{not} necessarily $\mathcal I$-characteristic for characteristic $K_i$-s ($i\in [d]$; \citealt[Fig.~1, Example~2]{szabo18characteristic2}). However, if $\X_i$ is a locally compact Polish space and $K_i$ is $c_0$-universal for all $i\in[d]$, then $\otimes_{i=1}^d K_i$ is $c_0$-universal \citep[Theorem~5]{szabo18characteristic2} and hence it is $\mathcal I$-characteristic. In case of  $\X_i = \R^{m_i}$ and continuous translation-invariant characteristic kernels $K_i$ ($i\in [d]$),  $\otimes_{i=1}^d K_i$ is $\mathcal I$-characteristic \citep[Theorem~4]{szabo18characteristic2}.

The following theorem settles the minimax lower bound of $\HSIC$ estimation.

\begin{theorem}[Minimax lower bound of $\HSIC$]\label{th:hsic}
    Let Assumption~\ref{as:hics} hold. Then, there exists a constant $B>0$ such that
    \begin{align}
        \liminf_{n\to\infty}\inf_{\hat F_n}\sup_{\P\in\mP(K;\X)}\P^n\Big(\left|\HSIC(\P)-\hat F_n\right|\geq Bn^{-1/2}\Big)>0,
    \end{align}
    where $\hat F_n \coloneq \hat F_n(X_{1:n})$ stands for any estimator of $\HSIC(\P)$ based on sample $X_{1:n}\sim \P^n$.
\end{theorem}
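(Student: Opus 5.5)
We will apply Le Cam's two-point method (Theorem~\ref{th:le-cam}). For every large $n$ we need a pair $(\P_1,\P_2)=(\P_1(n),\P_2(n))\in[\mP(K;\X)]^2$ such that $\KL(\P_1^n\|\P_2^n)\le\alpha$ for a fixed $\alpha>0$, and such that $|\HSIC(\P_1)-\HSIC(\P_2)|\ge 2s_n$ with $s_n=\Theta(n^{-1/2})$.

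\textbf{Construction.} Take the base measure $\P_2=\otimes_{i=1}^d\P_i$, with $(\P_i,\varphi_i)$ from Assumption~\ref{as:hics}(ii). It lies in $\mP(K;\X)$, since $\sqrt{K(x,x)}=\prod_i\sqrt{K_i(x_i,x_i)}$ and Fubini factorizes the integral into finite terms. Center each perturbation: $\psi_i=\varphi_i-\E_{\P_i}[\varphi_i]$. Then $\psi_i\in\mC_b(\X_i)$, it has $\P_i$-mean zero, and it is not $\P_i$-a.s.\ zero. Rescale so that $\|\psi_i\|_\infty\le 1$ and set $\psi(x)=\prod_{i=1}^d\psi_i(x_i)$. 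Define $\d\P_1=(1+\epsilon_n\psi)\,\d\P_2$ with $\epsilon_n=c\,n^{-1/2}$ and $c<1$. This is a probability measure because $\E_{\P_2}\psi=\prod_i\E_{\P_i}\psi_i=0$ and $1+\epsilon_n\psi>0$.

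The key feature is that every marginal of $\P_1$ equals $\P_i$. Integrating out any coordinate $j\ne i$ kills the product, since $\psi_j$ has mean zero; this uses $d\ge 2$, and for $d=1$ HSIC is identically zero, so that case is excluded or trivial. Moreover $\P_1\in\mP(K;\X)$ because its density with respect to $\P_2$ is bounded.

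\textbf{KL bound.} Using $\ln(1+u)\le u$,
\begin{align}
\KL(\P_1^n\|\P_2^n)=n\KL(\P_1\|\P_2)\le n\,\epsilon_n^2\,\E_{\P_2}[\psi^2]\le c^2=:\alpha .
\end{align}

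\textbf{Separation.} Since $\P_2$ is a product measure, $\HSIC(\P_2)=0$. The marginals of $\P_1$ are the $\P_i$, so
\begin{align}
C_K(\P_1)=\mu_K(\P_1)-\mu_K(\P_2)=\epsilon_n\int_\X K(\cdot,x)\psi(x)\,\d\P_2(x)=\epsilon_n\otimes_{i=1}^d h_i,
\end{align}
where $h_i=\int K_i(\cdot,x_i)\psi_i(x_i)\,\d\P_i(x_i)\in\H_{K_i}$. The factorization of the Bochner integral into a tensor product comes from Fubini, using $\sqrt{K_i(x,x)}\in\mathcal L_1(\X_i,\P_i)$ and the boundedness of $\psi_i$. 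Hence
\begin{align}
\HSIC(\P_1)=\epsilon_n\prod_{i=1}^d\|h_i\|_{\H_{K_i}}.
\end{align}
It remains to show that $A:=\prod_i\|h_i\|>0$, which is independent of $n$. Then $s_n=\tfrac{cA}{2}n^{-1/2}$, and Le Cam gives the statement with $B=cA/2$.

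\textbf{Main obstacle: showing $A>0$.} This is the only step that uses Assumption~\ref{as:hics}(i). Suppose $\epsilon_n\otimes_i h_i=0$. Then $\HSIC(\P_1)=0$, and $\mathcal I$-characteristicity forces $\P_1=\otimes_i\marg{\P_1}{i}=\P_2$. That means $\psi=0$ $\P_2$-a.s.

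To contradict this, note that for a product measure $\P_2\big(\prod_i\psi_i\ne0\big)=\prod_i\P_i(\psi_i\ne0)$. Each factor is positive because no $\psi_i$ vanishes $\P_i$-a.s. So $\psi\ne 0$ on a set of positive $\P_2$-measure, a contradiction. Hence $A>0$.

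The remaining care points are the measurability and integrability details: Bochner integrability of $K(\cdot,x)\psi(x)$, and the identification of the Bochner integral with a tensor product of factor integrals. Both follow from the integrability of each factor under the product measure.
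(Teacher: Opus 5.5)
Your proposal is correct and follows essentially the same route as the paper: Le Cam's two-point method with the product perturbation $\d\P_1=(1+\epsilon_n\prod_i\psi_i)\,\d(\otimes_i\P_i)$, preservation of the marginals, the bound $n\KL(\P_1\|\P_2)\le c^2$, and positivity of the separation constant via $\mathcal I$-characteristicity. Your tensor factorization $C_K(\P_1)=\epsilon_n\otimes_i h_i$ is a harmless extra, since the paper simply works with $\|\E_{\P_0}[K(\cdot,X)\varphi(X)]\|_{\H_K}$, and your check that $\otimes_i\P_i\in\mP(K;\X)$ fills a step the paper leaves implicit.

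One correction to your aside: for $d=1$ the statement is not trivial but false, because $\HSIC\equiv 0$ and the zero estimator is exact. The paper's marginal lemma also silently needs $d\ge 2$ (its step (g) treats an empty product as $0$ rather than $1$), so excluding $d=1$ is genuinely necessary.
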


We comment on this result after stating a direct corollary.

\begin{corollary}[Minimax lower bound of $C_K$]\label{cor:cross_cov}
Let Assumption~\ref{as:hics} hold. Then, there exists a constant $B>0$ such that
\begin{align}
    \liminf_{n\to\infty} \inf_{\hat F_n}\sup_{\P\in\mP(K;\X)} \P^n\left(\norm{C_K(\P) -\hat G_n}{\H_K}\geq Bn^{-1/2}\right)>0,
\end{align}
where $\hat G_n\coloneq \hat G_n(X_{1:n})$ stands for any estimator of $C_K(\P)$ based on sample $X_{1:n}\sim \P^n$.
\end{corollary}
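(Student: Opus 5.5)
We derive Corollary~\ref{cor:cross_cov} from Theorem~\ref{th:hsic} through the reverse triangle inequality, in the same way Corollary~\ref{cor:mean_emb} follows from Theorem~\ref{th:mmd}. Let $\hat G_n=\hat G_n(X_{1:n})$ be any $\H_K$-valued estimator of $C_K(\P)$. Define the real-valued estimator $\hat F_n \coloneq \norm{\hat G_n}{\H_K}$ of $\HSIC(\P)$. It is a measurable function of the sample, since the norm is continuous. Because $\HSIC(\P)=\norm{C_K(\P)}{\H_K}$ by \eqref{eq:hsic-definition}, the reverse triangle inequality gives, for every $\P\in\mP(K;\X)$ and every sample,
\begin{align}
\big|\HSIC(\P)-\hat F_n\big| = \Big|\norm{C_K(\P)}{\H_K}-\norm{\hat G_n}{\H_K}\Big| \le \norm{C_K(\P)-\hat G_n}{\H_K}.
\end{align}

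This yields the event inclusion
\begin{align}
\Big\{\big|\HSIC(\P)-\hat F_n\big|\ge Bn^{-1/2}\Big\}\subseteq\Big\{\norm{C_K(\P)-\hat G_n}{\H_K}\ge Bn^{-1/2}\Big\},
\end{align}
so $\P^n$ of the right-hand event is at least $\P^n$ of the left-hand event. Taking $\sup_{\P\in\mP(K;\X)}$ preserves this inequality. Hence for every $\hat G_n$,
\begin{align}
\sup_{\P}\P^n\Big(\norm{C_K(\P)-\hat G_n}{\H_K}\ge Bn^{-1/2}\Big) \ge \inf_{\hat F_n}\sup_{\P}\P^n\Big(\big|\HSIC(\P)-\hat F_n\big|\ge Bn^{-1/2}\Big).
\end{align}

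Next, take the infimum over $\hat G_n$ on the left; the right-hand side does not depend on $\hat G_n$. Then take $\liminf_{n\to\infty}$. With $B$ the constant from Theorem~\ref{th:hsic}, which is available under Assumption~\ref{as:hics}, the right-hand side has a strictly positive $\liminf$, and this proves the claim with the same $B$.

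The main point needing care is only that $\hat F_n$ belongs to the class of estimators over which Theorem~\ref{th:hsic} takes its infimum. That class is arbitrary measurable maps of $X_{1:n}$, and $\norm{\cdot}{\H_K}\circ\hat G_n$ is such a map whenever $\hat G_n$ is Borel measurable into $\H_K$. If the formulation of Theorem~\ref{th:hsic} also allows randomized estimators, the same inclusion argument applies pointwise in the extra randomness. Beyond this, no separate construction is needed: the adversarial pair built in the proof of Theorem~\ref{th:hsic} already separates the HSIC values, and therefore also the operators $C_K$, at rate $n^{-1/2}$.
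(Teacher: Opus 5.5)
Your argument is correct, but it is organized differently from the paper's. The paper does not use Theorem~\ref{th:hsic} as a black box. It re-runs Le Cam's method (Theorem~\ref{th:le-cam}) with parameter space $\{C_K(\P):\P\in\mP(K;\X)\}$ and metric $\norm{\theta_1-\theta_2}{\H_K}$. It reuses the adversarial pair $(\P_0,\P_{(n)})$ and the KL bound \eqref{eq:KLboundHSIC} from the proof of Theorem~\ref{th:hsic}. It then applies the reverse triangle inequality to the \emph{two hypotheses}: $\norm{C_K(\P_0)-C_K(\P_{(n)})}{\H_K}\ge|\HSIC(\P_0)-\HSIC(\P_{(n)})|=2Bn^{-1/2}$, which is in fact an equality since $C_K(\P_0)=0$.

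You apply the same inequality instead between the \emph{truth and the estimate}. This turns any $\H_K$-valued estimator $\hat G_n$ into the HSIC estimator $\norm{\hat G_n}{\H_K}$, and shows that, at every $n$ and every threshold, the minimax risk for $C_K$ dominates that for HSIC.

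What your route buys:
\begin{itemize}
\item It needs only the \emph{statement} of Theorem~\ref{th:hsic}.
\item It keeps the same constant $B$.
\item It would automatically transfer any other lower bound for HSIC estimation.
\end{itemize}
Its one extra obligation is that $\norm{\hat G_n}{\H_K}$ be an admissible (measurable) estimator, which you address.

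What the paper's route buys: it never composes estimators, and it gives the explicit bound $f(\alpha)$ for all $n\ge n_0$ directly. The price is that it relies on the internals of the proof of Theorem~\ref{th:hsic}.

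A minor aside: the paper proves Corollary~\ref{cor:mean_emb} by a direct Le Cam argument, not by a reduction from Theorem~\ref{th:mmd}. So your opening analogy does not match how the paper proceeds, though this has no bearing on your proof.
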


We elaborate both results in the following remark.
\begin{remark} As before, we elaborate on the differences compared to related minimax lower bounds in the following; our results relax their assumptions significantly.
    \begin{itemize}
        \item \tb{Difference to known minimax lower bound of HSIC and $C_K$.} \citet[Theorem 1 and Corollary 1]{kalinke24minimax} assumed that $\X = \R^d$ and continuous translation-invariant characteristic kernels.

        \item \tb{Difference to known minimax lower bound of the covariance operator.} Corollary~\ref{cor:cross_cov} also provides lower bounds for the related problem of estimating the centered covariance operator (by noting that the centered covariance is the centered cross-covariance of a random variable with itself). Hence, our results also relax the assumptions of $\X = \R^d$ and continuous translation-invariant characteristic kernels imposed by \citet{zhou19class} and  \citet[Corollary 1]{kalinke24minimax}.
    \end{itemize}
\end{remark}

\subsection{Kernel Stein Discrepancy}
Let $(\X,\tau_\X)$ be a topological space, $\P_0\in\M_1^+(\X)$, $K_{\P_0}:\X^2\to\R$ a Stein kernel%
, and $\H_{K_{\P_0}}$ the RKHS associated with $K_{\P_0}$. We impose the following assumption.

\begin{assumption}{(\citealt[Assumption~4]{cribeiro26minimax})} \label{as:ksd}
Assume that there exists a pair $(\P_0,\varphi_0) \in \mathcal T \times \mC_b(\X)$ such that (i) $K_{\P_0}$ is characteristic to $\mP(K_{\P_0};\X)$ with regard to $\P_0$ and (ii) there is no $c\in\R$ for which $\varphi_0=c$ holds $\P_0$-a.s.
\end{assumption}

Sufficient conditions for Assumption~\ref{as:ksd}(i) are known in $\R^d$ (\citealt[Proposition~3.3]{liu16kernelized}; \citealt[Theorem~2.2]{chwialkowski16kernel}). Further, Assumption~\ref{as:ksd}(ii) is the same as Assumption~\ref{as:mmd}(ii); hence, Lemma~\ref{lem:sufficient_cond} also states sufficient conditions for it.

\begin{theorem}\label{th:ksd}\emph{(Minimax lower bound of $\KSD$; \citealt[Theorem~2]{cribeiro26minimax})}
    Let Assumption~\ref{as:ksd} hold. Then, there exists a constant $B>0$ such that \begin{align}
       \liminf_{n\to\infty} \inf_{\hat F_n}\sup_{\P_0\in\mathcal T}\sup_{\P\in\mP(K_{\P_0};\X)} \P^n\Big(\left|\KSD(\P_0,\P)-\hat F_n\right|\geq Bn^{-1/2}\Big)>0, \label{eq:ksd_lb}
    \end{align}
    where $\hat F_n \coloneq \hat F_n(X_{1:n})$ denotes any estimator $\KSD(\P_0,\P)$ using sample $X_{1:n}\sim\P^n$.
\end{theorem}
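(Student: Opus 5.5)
We apply Le Cam's two-point method (Theorem~\ref{th:le-cam}) with $\P_0$ and $\varphi_0$ from Assumption~\ref{as:ksd} fixed throughout. The adversarial pair is the target $\P_0$ itself together with a perturbation of it. Let $M=\sup_x|\varphi_0(x)|$ and define the centered perturbation $\psi=\varphi_0-\E_{\P_0}[\varphi_0(X)]$. This $\psi$ is bounded by $2M$ and has $\P_0$-mean zero. It is not $\P_0$-a.s.\ zero, because $\varphi_0$ is not $\P_0$-a.s.\ constant. For $n$ large, set $\theta_n=n^{-1/2}$ and define
\begin{align}
\d\P_{1,n}=\big(1+\theta_n\psi\big)\,\d\P_0,\qquad \P_{2,n}=\P_0 .
\end{align}
$\P_{1,n}$ is a probability measure once $2M\theta_n<1$. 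Since its density $1+\theta_n\psi$ is bounded, $\P_{1,n}\in\mP(K_{\P_0};\X)$ whenever $\P_0$ is.

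\textbf{Step 1: $\P_0\in\mP(K_{\P_0};\X)$.} This does not come for free: Bochner integrability of $\sfm$ under $\P_0$ is implicit in the definition of $\mathcal T$, since \eqref{eq:sfm=0} is required to make sense. If needed, I would instead work with $\int\sqrt{K_{\P_0}(x,x)}\,\d\P_0<\infty$ as a standing condition, following \citet{cribeiro26minimax}.

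\textbf{Step 2: KL bound.} Using $\ln(1+u)\le u$ and $\E_{\P_0}\psi=0$,
\begin{align}
\KL(\P_{1,n}^n\|\P_0^n)=n\,\E_{\P_0}\big[(1+\theta_n\psi)\ln(1+\theta_n\psi)\big]\le n\theta_n^2\,\E_{\P_0}[\psi^2]\le 4M^2=:\alpha .
\end{align}

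\textbf{Step 3: separation of the functional.} By \eqref{eq:Ep_0K_0=0}, $\KSD(\P_0,\P_0)=0$. By linearity of the Bochner integral and \eqref{eq:ksd_def},
\begin{align}
\KSD(\P_0,\P_{1,n})=\norm{\E_{\P_0}[(1+\theta_n\psi(X))K_{\P_0}(\cdot,X)]}{\H_{K_{\P_0}}}=\theta_n\,\norm{\E_{\P_0}[\psi(X)K_{\P_0}(\cdot,X)]}{\H_{K_{\P_0}}}=:\theta_n c .
\end{align}
Taking $s_n=\tfrac{c}{2}n^{-1/2}$, Le Cam's method then gives \eqref{eq:ksd_lb} with $B=c/2$. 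The same $\P_0$ appears in both hypotheses, so the supremum over $\mathcal T$ is harmless.

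\textbf{Main obstacle: showing $c>0$.} I would use the characteristic property in Assumption~\ref{as:ksd}(i). Suppose $c=0$. Take $\epsilon>0$ small and let $\Q=(1+\epsilon\psi)\P_0$. Then $\Q\in\mP(K_{\P_0};\X)$ and
\begin{align}
\KSD(\P_0,\Q)=\epsilon c=0,
\end{align}
so characteristicness forces $\Q=\P_0$. That means $\psi=0$ holds $\P_0$-a.s., i.e.\ $\varphi_0$ is $\P_0$-a.s.\ constant, contradicting Assumption~\ref{as:ksd}(ii).

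The delicate points are bookkeeping:
\begin{itemize}
\item making sure the density perturbation keeps $\Q$ in the class on which the characteristic property is assumed;
\item justifying the interchange of the expectation with the feature map, which is where separability of $\H_{K_{\P_0}}$ and measurability via \citep[Lemma~4.25]{steinwart08support} enter;
\item choosing $n_0$ large enough that $1+\theta_n\psi>0$, so that $\P_{1,n}$ is a genuine probability measure.
\end{itemize}
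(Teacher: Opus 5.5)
Your proposal is correct and is essentially the paper's proof. The paper likewise restricts the supremum over $\mathcal T$ to the fixed $\P_0$ of Assumption~\ref{as:ksd} and perturbs it by a bounded, continuous, $\P_0$-centered function; your explicit $\psi=\varphi_0-\E_{\P_0}[\varphi_0(X)]$ is what Lemma~\ref{lem:phi-exists} provides. It then obtains $\KSD(\P_0,\P_{(n)})=\varepsilon_n C_\varphi$ by linearity, with $C_\varphi>0$ from the characteristic property and $\P_{(n)}\neq\P_0$, bounds the KL divergence via $\ln(1+u)\le u$, and applies Le Cam's method. Your remark that $\P_0\in\mP(K_{\P_0};\X)$ follows from reading \eqref{eq:sfm=0} as a Bochner integral is the same point the paper relies on when it invokes \eqref{eq:Ep_0K_0=0}.
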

\begin{remark}~

\begin{itemize}
    \item \tb{Double supremum.} Note that $K_{\P_0}$ depends on the chosen $\P_0\in\mathcal T$ and hence, so does $\mP(K_{\P_0};\X)$. This is the reason of the double supremum in~\eqref{eq:ksd_lb}.
    \item \tb{Difference with known minimax lower bound of KSD.} \citet{gogolashvili26minimax} assumed $\X = \R^d$ and for measures to be in $S(K^2;\X)=\big\{\P\!\in\!
    \M_1^+(\X)\!:~\int_{\X^2}\!K^2(x,y)\d(\P\otimes\P)(x,y)<\infty\big\}$.
    \citet[Theorem~1]{cribeiro26minimax} also assumed $\X = \R^d$ and Stein kernels with characteristic translation-invariant base kernels.

\end{itemize}
\end{remark}

\section*{Acknowledgments}
FK is supported by the KiKIT (The Pilot Program for Core-Informatics at the KIT) of the Helmholtz Association.
\newpage

\appendix

\section{Proofs}\label{sec:proofs}

\begin{figure}[!ht]
\centering
\resizebox{1\textwidth}{!}{%
\begin{circuitikz}
\tikzstyle{every node}=[font=\fontsize{18.2pt}{23.7pt}\selectfont]
\node [font=\fontsize{18.2pt}{23.7pt}\selectfont, inner xsep=0.080cm, inner ysep=0.085cm, rounded corners=0.020cm] at (13,7) {Theorem~\ref{th:mmd}};
\node [font=\fontsize{18.2pt}{23.7pt}\selectfont, inner xsep=0.080cm, inner ysep=0.085cm, rounded corners=0.020cm] at (20,7) {Theorem~\ref{th:ksd}};
\node [font=\fontsize{18.2pt}{23.7pt}\selectfont, inner xsep=0.080cm, inner ysep=0.085cm, rounded corners=0.020cm] at (26.875,7) {Theorem~\ref{th:hsic}};
\node [font=\fontsize{18.2pt}{23.7pt}\selectfont, inner xsep=0.080cm, inner ysep=0.085cm, rounded corners=0.020cm] at (12.875,10.75) {\ref{lem:p_n-and-q_m-inP}};
\node [font=\fontsize{18.2pt}{23.7pt}\selectfont, inner xsep=0.080cm, inner ysep=0.085cm, rounded corners=0.020cm] at (17.5,10.75) {\ref{lem:kl-bound-perturb}};
\node [font=\fontsize{18.2pt}{23.7pt}\selectfont, inner xsep=0.080cm, inner ysep=0.085cm, rounded corners=0.020cm] at (22.5,10.75) {\ref{th:le-cam}};
\node [font=\fontsize{18.2pt}{23.7pt}\selectfont, inner xsep=0.080cm, inner ysep=0.085cm, rounded corners=0.020cm] at (27,10.75) {\ref{lem:tsy-kl-prod-measure}};
\node [font=\fontsize{18.2pt}{23.7pt}\selectfont, inner xsep=0.080cm, inner ysep=0.085cm, rounded corners=0.020cm] at (12.875,12.75) {\ref{lem:p_n-and-q_m-inM}};
\draw [-{Stealth[scale=1.5]}, ] (12.875,12.375) -- (12.875,11.25);
\draw [-{Stealth[scale=1.5]}, ] (20,10) -- (13.25,7.5);
\draw [-{Stealth[scale=1.5]}, ] (20,10) -- (20,7.5);
\draw [-{Stealth[scale=1.5]}, ] (20,10) -- (26.875,7.625);
\draw [short] (17.5,10.5) -- (17.5,10);
\draw [short] (22.5,10.5) -- (22.5,10);
\draw [short] (13,10) -- (27,10);
\draw [short] (13,10) -- (13,10.5);
\draw [short] (27,10) -- (27,10.5);
\node [font=\fontsize{18.2pt}{23.7pt}\selectfont, fill={rgb,255:red,255; green,255; blue,255}, fill opacity=1, text opacity=1, inner xsep=0.080cm, inner ysep=0.085cm, rounded corners=0.020cm] at (20,14.5) {\ref{lem:mu_k-perturbed}};
\node [font=\fontsize{18.2pt}{23.7pt}\selectfont, fill={rgb,255:red,255; green,255; blue,255}, fill opacity=1, text opacity=1, inner xsep=0.080cm, inner ysep=0.085cm, rounded corners=0.020cm] at (20,3.25) {\ref{lem:phi-exists}};
\node [font=\fontsize{18.2pt}{23.7pt}\selectfont, fill={rgb,255:red,255; green,255; blue,255}, fill opacity=1, text opacity=1, inner xsep=0.080cm, inner ysep=0.085cm, rounded corners=0.020cm] at (26.125,5.125) {\ref{lem:phi-exists-X1xX2}};
\node [font=\fontsize{18.2pt}{23.7pt}\selectfont, fill={rgb,255:red,255; green,255; blue,255}, fill opacity=1, text opacity=1, inner xsep=0.080cm, inner ysep=0.085cm, rounded corners=0.020cm] at (28,5.125) {\ref{lem:marginals_pn}};
\node [font=\fontsize{18.2pt}{23.7pt}\selectfont, fill={rgb,255:red,255; green,255; blue,255}, fill opacity=1, text opacity=1, inner xsep=0.080cm, inner ysep=0.085cm, rounded corners=0.020cm] at (23.75,5.125) {\ref{lem:p_n-neq-p_0}};
\node [font=\fontsize{18.2pt}{23.7pt}\selectfont, fill={rgb,255:red,255; green,255; blue,255}, fill opacity=1, text opacity=1, inner xsep=0.080cm, inner ysep=0.085cm, rounded corners=0.020cm] at (26.125,3.25) {\ref{lem:open-A}};
\node [font=\fontsize{18.2pt}{23.7pt}\selectfont, fill={rgb,255:red,255; green,255; blue,255}, fill opacity=1, text opacity=1, inner xsep=0.080cm, inner ysep=0.085cm, rounded corners=0.020cm] at (28,3.25) {\ref{th::ex::Fubini}};
\draw [-{Stealth[scale=1.5]}, ] (26.25,3.875) -- (26.25,4.75);
\draw [-{Stealth[scale=1.5]}, ] (28,3.875) -- (28,4.75);
\draw [-{Stealth[scale=1.5]}, ] (26.125,5.75) -- (26.125,6.625);
\draw [-{Stealth[scale=1.5]}, ] (27.875,5.75) -- (27.875,6.625);
\draw [-{Stealth[scale=1.5]}, ] (23.5,5.625) -- (21.625,6.625);
\draw [-{Stealth[scale=1.5]}, ] (24,5.625) -- (25.5,6.625);
\draw [-{Stealth[scale=1.5]}, ] (20.625,3.375) -- (25.375,5);
\draw [-{Stealth[scale=1.5]}, ] (20,3.875) -- (20,6.5);
\draw [-{Stealth[scale=1.5]}, ] (19.375,3.5) -- (14.625,6.5);
\node [font=\fontsize{18.2pt}{23.7pt}\selectfont, fill={rgb,255:red,255; green,255; blue,255}, fill opacity=1, text opacity=1, inner xsep=0.080cm, inner ysep=0.085cm, rounded corners=0.020cm] at (12.875,3.25) {\ref{cor:p_n-neq-q_m}};
\draw [-{Stealth[scale=1.5]}, ] (13,3.875) -- (13,6.5);
\node [font=\fontsize{18.2pt}{23.7pt}\selectfont, fill={rgb,255:red,255; green,255; blue,255}, fill opacity=1, text opacity=1, inner xsep=0.080cm, inner ysep=0.085cm, rounded corners=0.020cm] at (7.5,7) {Corollary~\ref{cor:mean_emb}};
\node [font=\fontsize{18.2pt}{23.7pt}\selectfont, fill={rgb,255:red,255; green,255; blue,255}, fill opacity=1, text opacity=1, inner xsep=0.080cm, inner ysep=0.085cm, rounded corners=0.020cm] at (32.5,7) {Corollary~\ref{cor:cross_cov}};
\node [font=\fontsize{18.2pt}{23.7pt}\selectfont, fill={rgb,255:red,255; green,255; blue,255}, fill opacity=1, text opacity=1, inner xsep=0.080cm, inner ysep=0.085cm, rounded corners=0.020cm] at (17.375,0.75) {Lemma~\ref{lem:sufficient_cond}};
\node [font=\fontsize{18.2pt}{23.7pt}\selectfont, fill={rgb,255:red,255; green,255; blue,255}, fill opacity=1, text opacity=1, inner xsep=0.080cm, inner ysep=0.085cm, rounded corners=0.020cm] at (21.875,0.75) {\ref{lem:phi-not-a.s.c.}};
\draw [-{Stealth[scale=1.5]}, ] (11.25,7) -- (9.125,7);
\draw [-{Stealth[scale=1.5]}, ] (28.75,7) -- (30.875,7);
\draw [-{Stealth[scale=1.5]}, ] (21.25,0.75) -- (18.75,0.75);
\draw [short] (19.5,14.625) -- (11.875,14.625);
\draw [short] (20.5,14.625) -- (28.125,14.625);
\draw [-{Stealth[scale=1.5]}, ] (11.875,14.625) -- (11.875,7.5);
\draw [-{Stealth[scale=1.5]}, ] (28.125,14.625) -- (28.125,7.5);
\end{circuitikz}
}%
\caption{Summary of the dependencies of our results. $R_1 \leftarrow R_2$ means that ``result $R_1$ depends on $R_2$''. \ref{lem:p_n-and-q_m-inP}, \ref{lem:kl-bound-perturb}, \ref{th:le-cam} and \ref{lem:tsy-kl-prod-measure} are all used in Theorems~\ref{th:mmd}, \ref{th:hsic} and \ref{th:ksd}.}
\label{fig:my_label}
\end{figure}

\subsection{Proof of Lemma~\ref{lem:sufficient_cond}}
Let $\varphi_0\in\H_K\subseteq\mC_b(\X)$ be the assumed non-constant element, where the inclusion $\H_K\subseteq\mC_b(\X)$ holds by $K\in\mC_b(\X^2)$ \citep[Lemma~4.28]{steinwart08support}.
Further, as $\X$ is separable, there exists a countable dense subset $\{x_i\}_{i=1}^{\infty}\subseteq\X$. Let $\P_0=\sum_{i=1}^\infty2^{-i}\delta_{x_i}\in\M_1^+(\X)$.
Notice that $\supp(\P_0)$ is well-defined as $\X$ is separable and metric, which implies second-countability.$^\text{\ref{foot:2ndcount}}$ Further, $\{x_i\}\subseteq \supp(\P_0)$ for all $i\in\Np$. We argue by contradiction: assume that $\{x_i\}\subseteq \X\setminus\supp(\P_0)$. This implies that $2^{-i}\jover a =\P_0(\{x_i\})\jover b\leq \P_0(\X\setminus\supp(\P_0)) = 0$, where (a) comes by the definition of $\P_0$, and (b)  holds by the monotonicity of probability measures; comparing the l.h.s.\ and the r.h.s.\ of the inequality leads to a contradiction.
Then,
\begin{align}
\overline{\cup_{i=1}^\infty \{x_i\}} \jover{a}{\subseteq} \supp(\P_0) \jover{b}{\subseteq} \X \jover{c}{=} \overline{\cup_{i=1}^\infty \{x_i\}} \implies \supp(\P_0) = \X.
\end{align}
where (a) holds as $\cup_{i=1}^\infty\{x_i\}\subseteq \supp(\P_0)$ (as shown above) and as $\supp(\P_0)$ is closed (by definition), (b) comes from the definition of the support, (c) is by the denseness of $\{x_i\}_{i=1}^{\infty}$ in $\X$.
The boundedness of $K$ following from $K\in\mC_b(\X^2)$ implies that $\M_1^+(\X)=\mP(K;\X)$ and hence that $\P_0\in\mP(K;\X)$. Moreover, $\P_0$ having full-support guarantees that $\varphi_0$ is not $\P_0$-a.s.\ constant by Lemma~\ref{lem:phi-not-a.s.c.}. This $(\P_0,\varphi_0)$-pair proves the claim.

\subsection{Proof of Theorem~\ref{th:mmd}}\label{sec:proof_th1}

This proof relies on Le Cam's two-point method (recalled in Section~\ref{sec:results}); following Le Cam's approach, the proof has 3 parts: after defining the adversarial pair $(\theta_1,\theta_2)$, we then control the distance $d(\theta_1,\theta_2)=|\theta_1-\theta_2|$, followed by bounding the divergence $\KL(\P_{\theta_2}\|\P_{\theta_1})$.

The \tb{definition of $(\theta_1,\theta_2)$} relies on certain auxiliary quantities $\P_0$, $\varphi_0$, $\P_{(n)}$ and $\Q_{(m)}$ which we define in the following.
Let $(\P_0,\varphi_0)\in\mP(K;\X)\times \mC_b(\X)$ satisfy Assumption~\ref{as:mmd}(ii) and let $\varphi\in \mC_b(\X)$ be a perturbation function associated with $(\P_0,\varphi_0)$ according to Lemma~\ref{lem:phi-exists} [recall that $\mP(K;\X)\subseteq\M_1^+(\X)$], satisfying
$\E_{\P_0}[\varphi(X)] = 0$ and such that $\varphi(x)\neq0$ on all points of a set of positive $\P_0$-measure.
For $A\in \mathcal B(\X)$, let \begin{align}
        \P_{(n)}(A) = \int_A 1+\varepsilon_n \varphi(x) \d\P_0(x),\quad  \Q_{(m)}(A) = \int_A 1-\nu_m \varphi(x) \d\P_0(x), \label{eq:p_n_and_q_m_def}
    \end{align}
    where $\varepsilon_n = c_1n^{-1/2}$, $\nu_m = c_2m^{-1/2}$, and $c_1$ and $c_2$ are arbitrary positive constants. Note that there exist $n_{0,1},m_{0,1}\in \Np$ such that for all $n\ge n_{0,1}$ and $m\ge m_{0,1}$ it holds that $\P_{(n)},\Q_{(m)}\in\mP(K;\X)$  (by Lemma~\ref{lem:p_n-and-q_m-inP}) and $\P_{(n)}\neq\Q_{(m)}$ (by Lemma~\ref{cor:p_n-neq-q_m}).

With these notations at hand, to bring ourselves into the setting of Theorem~\ref{th:le-cam}, consider arbitrary $n\ge n_{0,1}$ and $m\ge m_{0,1}$, define the sample space $\Y = \X^n\times\X^m$, $\Theta \coloneq  \big\{\theta_{\P,\Q} \coloneq \MMD(\P,\Q):~(\P,\Q)\in [\mP(K;\X)]^2\}$ and $\mathcal P_\Theta \coloneq \big\{ \P^n \otimes\Q^m:~(\P,\Q)\in[\mP(K;\X)]^2\big\} = \big\{\P^n\otimes\Q^m:~\theta_{\P,\Q}\in\Theta \big\}$. Let $F:[\mP(K;\X)]^2 \to \R$ be defined as $(\P,\Q)\mapsto \MMD(\P,\Q)$ and let $\hat F_{n,m}$ be a corresponding estimator with $n$ and $m$ samples from $\P$ and $\Q$, respectively. Define the metric $d(\theta_1,\theta_2)\coloneq |\theta_1-\theta_2|$ and the adversarial pair of hypotheses $(\theta_1,\theta_2)=\big(\theta_{1}(n,m),\theta_{2}(n,m)\big) \coloneq (\theta_{\P_0,\P_0},\theta_{\P_{(n)},\Q_{(m)}})$. By the definition of $\mathcal P_\Theta$ we have  $\left(\P_{\theta_{1}},\P_{\theta_{2}}\right)=\big(\P_{\theta_{\P_0,\P_0}},\P_{\theta_{\P_{(n)},\Q_{(m)}}}\big) = \big(\P_0^n\otimes\P_0^m,\P_{(n)}^n\otimes\Q_{(m)}^m\big)$.\footnote{We note that $\P_{(n)}^n\otimes\Q_{(m)}^m$ is well-defined as $\P_{(n)},\Q_{(m)}\in \M_1^+(\X)$.}

We now proceed by \tb{controlling the distance $d(\theta_1,\theta_2)$}:
    \begin{align}
       d\big(\theta_{\P_0,\P_0},\theta_{\P_{(n)},\Q_{(m)}}\big)
       \jover{a}{=}& \MMD(\P_{(n)},\Q_{(m)}) = \norm{\mu_K(\P_{(n)}) - \mu_K(\Q_{(m)})}{\H_K} \\
         \jover{b}{=} &\norm{\mu_K(\P_0)\! + \!\varepsilon_n\E_{\P_0}\big[K(\cdot,X)\varphi(X)\big]\! - \!\mu_K(\P_0)\!+\!\nu_m\E_{\P_0}\big[K(\cdot,X)\varphi(X)\big]}{\H_K} \\
        \jover{c}{=} & \norm{\varepsilon_n\E_{\P_0}\big[K(\cdot,X)\varphi(X)\big]  + \nu_m\E_{\P_0}\big[K(\cdot,X)\varphi(X)\big]}{\H_K} \\
        \jover{d}{=}& (\varepsilon_n+\nu_m)\underbrace{\norm{\E_{\P_0}\big[K(\cdot,X)\varphi(X)\big]}{\H_K}}_{\eqcolon C_{\varphi}} = (\varepsilon_n+\nu_m) C_\varphi \jover{e}> 0, \label{eq:mmd_bound}
    \end{align}
    where (a) follows from $\MMD(\P_0,\P_0)=0$ and the non-negativity of $\MMD$ (holding by its definition), (b) comes by applying Lemma~\ref{lem:mu_k-perturbed} to both terms inside the norm, in (c) the expression within the norm was simplified, and (d) follows from the absolute homogeneity of norms and the fact that $\varepsilon_n+\nu_m>0$. The injectivity of  $\mu_K$ [holding by Assumption~\ref{as:mmd}(i)] and recalling that $\P_{(n)}\neq\Q_{(m)}$ yield (e) as all steps coming before were equalities.  Hence,
    \begin{align}
    \MMD(\P_{(n)},\Q_{(m)}) &= (\varepsilon_n + \nu_m)C_\varphi \jover{a}{=}\left(c_1n^{-1/2} + c_2m^{-1/2}\right)C_\varphi \jover{b}{\geq} c\left(n^{-1/2} + m^{-1/2}\right)C_\varphi\\
    &\jover{c}{=}  2B\left(n^{-1/2} + m^{-1/2}\right), \label{eq:mmd-dist-cont}
    \end{align}
where (a) comes from the definition of $\varepsilon_n$ and $\nu_m$. (b) and (c) follow from introducing the notations $c\coloneq\min(c_1,c_2)>0$ and $B\coloneq cC_{\varphi}/2>0$.

We continue by \tb{bounding the divergence $\mathrm{KL}(\P_{\theta_2}\|\P_{\theta_1})$}. Indeed,
    \begin{align}
        \hspace{-0.3cm}\KL(\P^n_n\otimes\Q_{(m)}^m\|\P_0^n\otimes\P_0^m)
        &\jover{a}=n\KL(\P_{(n)}\|\P_0) + m\KL(\Q_{(m)}\|\P_0)\jover{b}\leq c^2_1\alpha_1 + c^2_2\alpha_2\eqcolon \alpha,\qquad  \label{eq:kl-bound-proof}
    \end{align}
    where (a) comes from Lemma~\ref{lem:tsy-kl-prod-measure}, and (b) follows from the facts that $\KL(\P_{(n)}\|\P_0)\leq c_1^2\ \alpha_1/n$ for $n\geq n_{0,2}\in\Np$ and $\KL(\Q_{(m)}\|\P_0)\leq c_2^2\ \alpha_2/m$ for $m\geq m_{0,2}\in\Np$, with some $\alpha_1,\alpha_2>0$ (by Lemma~\ref{lem:kl-bound-perturb}) giving rise to an $\alpha>0$ by $c_1,c_2,\alpha_1,\alpha_2>0$.

    Then, taking $n_0= \max\{n_{0,1},n_{0,2}\}$, $m_0 = \max\{m_{0,1},m_{0,2}\}$, and invoking Theorem~\ref{th:le-cam} for all $n\geq n_0$ and $m\geq m_0$ [possible by our results \eqref{eq:mmd-dist-cont} and \eqref{eq:kl-bound-proof}] proves that
    \begin{equation}
        \inf_{\hat F_{n,m}} ~ \sup_{(\P,\Q)\in [\mP(K;\X)]^2} \!(\P^n\otimes\Q^m)\!\Big(\big|\MMD(\P,\Q)-\hat F_{n,m}\big|\!\geq\! B\big(n^{-1/2} + m^{-1/2})\Big) \ge f(\alpha),
    \end{equation}
    with $f(\alpha) =  \max \!\big\{\exp(-\alpha) / 4,(1-\sqrt{\alpha/2})\big\}>0$.

\subsection{Proof of Corollary~\ref{cor:mean_emb}}
We follow the same overall strategy as in the proof of Theorem~\ref{th:mmd} (Section~\ref{sec:proof_th1}), but, as estimating $\mu_K$ only requires a sample of a single distribution, we consider a setup that does not involve $m \in \Np$. Indeed, let $n\in\Np$, the sample space $\Y\coloneq \X^n$,  $\Theta \coloneq \big\{\theta_{\P}\coloneq \mu_K(\P):~ \P\in\mP(K;\X)\big\}$, and $\mathcal P_\Theta \coloneq \big\{ \P^n:~\P\in\mP(K;\X)\big\}=\big\{\P^n:~\theta_\P\in\Theta\big\}$. Further, let $F:\mP(K;\X)\to\H_K$ be defined as $\P\mapsto \mu_K(\P)$, and let $\hat F_n$ be a corresponding estimator with $n$ samples. For a pair of hypotheses $(\theta_1,\theta_2)\in\Theta^2$, we define the distance $d(\theta_1,\theta_2)\coloneq \norm{\theta_1-\theta_2}{\H_K}$.

We first \tb{define the adversarial pair} $(\theta_1,\theta_2)\in\Theta^2$.
Consider $\P_0$ and $\P_{(n)}$ as in~\eqref{eq:p_n_and_q_m_def}, and define $(\theta_1,\theta_2)\coloneq(\theta_1(n),\theta_2(n))\coloneq (\theta_{\P_0},\theta_{\P_{(n)}})$. By Lemma~\ref{lem:p_n-and-q_m-inP}, there exists $n_{0,1}\in\Np$ such that $\P_{(n)}\in\mP(K;\X)$ for all $n\geq n_{0,1}$;
hence, by the definition of $\mathcal P_\Theta$, we then have that $\big(\P_{\theta_1},\P_{\theta_2}\big)=\big(\P_{\theta_{\P_0}},\P_{\theta_{\P_{(n)}}}\big)=\big(\P_0^n,\P_{(n)}^n\big)$.

We now proceed by \tb{controlling the distance $d(\theta_1,\theta_2)$:}
\begin{align}
    d(\theta_{\P_0},\theta_{\P_{(n)}}) &= \norm{\mu_K(\P_0)-\mu_K(\P_{(n)})}{\H_K} \jover a {=} \norm{\mu_K(\P_0)-\mu_K(\P_0)-\varepsilon_n\E_{\P_0}[K(\cdot,X)\varphi(X)]}{\H_K}
    \\
    &\jover b =  \varepsilon_n\underbrace{\norm{\E_{\P_0}[K(\cdot,X)\varphi(X)] }{\H_K}}_{\eqcolon C_\varphi} \jover c =
    \underbrace{c_1 C_\varphi}_{\eqcolon  2B} n^{-1/2}   = 2Bn^{-1/2} \overset{\eqref{eq:mmd_bound}}{>} 0,
\end{align}
where (a) is by Lemma~\ref{lem:mu_k-perturbed}, (b) follows by simplifying and the absolute homogeneity of norms combined with $\varepsilon_n>0$, and (c) holds by the definition of $\varepsilon_n$.

Note that $\KL(\P_{(n)}^n\|\P_0^n) \jover a = n\KL(\P_{(n)}\|\P_0) \jover b \leq n\frac{c_1^2}{n}\alpha= c_1^{2}\alpha <\infty$ \tb{bounds the divergence} for all $n\geq n_{0,2}$, where (a) comes from Lemma~\ref{lem:tsy-kl-prod-measure}, and (b) holds by Lemma~\ref{lem:kl-bound-perturb}. Finally, taking $n_0 = \max\{n_{0,1},n_{0,2}\}$, and invoking Theorem~\ref{th:le-cam} for all $n\geq n_0$, proves the statement.

\subsection{Proof of Theorem~\ref{th:hsic}}\label{sec:proof_th2}

As elaborated in the proof of Theorem~\ref{th:mmd} (Section~\ref{sec:proof_th1}), we bring ourselves into the setting of Theorem~\ref{th:le-cam}.
Let the sample space $\Y \coloneq \X^n$, the parameter space $\Theta \coloneq \{\theta_\P \coloneq \HSIC(\P):\ \P\in\mP(K;\X)\}$, and the indexed family $\mathcal P_\Theta\coloneq \{\P^n:\ \P\in \mP(K;\X)\} = \{\P^n:\ \theta_\P\in\Theta\}$. Further, let us define $F:\mP(K;\X)\to\R$ as $\P\mapsto\HSIC(\P)$, and let $\hat F_n$ be a corresponding estimator with $n$ samples. For a pair of hypotheses $(\theta_1,\theta_2)\in\Theta^2$, we define the distance $d(\theta_1,\theta_2) \coloneq |\theta_1-\theta_2|$. To apply Theorem~\ref{th:le-cam}, we are aiming to find an adversarial pair of hypotheses $(\theta_{1},\theta_{2})\in\Theta^2$ for which $d(\theta_{1},\theta_{2})\geq 2Bn^{-1/2}$ for some $B>0$ while $\KL(\P_{\theta_{2}}\|\P_{\theta_{1}})\leq\alpha$ for some $0 < \alpha < \infty$.

In order to \tb{define the pair $(\theta_1,\theta_2)$}, consider the pairs $(\P_i, \varphi_i)\in\mP(K_i;\X_i)\times\mC_b(\X_i)$ ($i\in[d]$) as given in Assumption~\ref{as:hics}(ii). Further, denote their product as $\P_0=\otimes_{i=1}^d\P_i$ and consider the function $\varphi\in\mC_b(\X)$ given by Lemma~\ref{lem:phi-exists-X1xX2} [valid by Assumption~\ref{as:hics}(ii)]. We will consider the adversarial pair $(\theta_1,\theta_2)=(\theta_{1}(n),\theta_{2}(n))\coloneq(\theta_{\P_0},\theta_{\P_{(n)}})$, where $\P_{(n)}$ is defined as
\begin{align}
    \P_{(n)}(A) = \int_{A} 1+\varepsilon_n\varphi(x) \d\P_0(x),
\end{align}
for $A\in\mathcal B(\X)$, $\varepsilon_n = cn^{-1/2}$, and $c>0$.
Further, by Lemma~\ref{lem:p_n-and-q_m-inP}, there is an $n_{0,1}\in\Np$ such that $\P_{(n)}\in\mP(K;\X)$ for all $n\geq n_{0,1}$. By the definition of $\mathcal P_\Theta$ we have $(\P_{\theta_1},\P_{\theta_2})= (\P_{\theta_{\P_0}},\P_{\theta_{\P_{(n)}}}) =(\P_0^n,\P_{(n)}^n)$.\footnote{We note that the $n$-fold product $\P_{(n)}^n$ is well-defined since $\P_{(n)}\in\mP(K;\X)\subseteq\M_1^+(\X)$.}

We now proceed by \tb{controlling the distance $d(\theta_1,\theta_2)$}.  Indeed,
\begin{align}
    d(\theta_{1},\theta_{2}) & =\Big|\underbrace{\HSIC(\P_0)}_{=0} - \HSIC(\P_{(n)})\Big| \jover{a}{=} \HSIC(\P_{(n)})
    \jover{b}{=}\MMD\big(\P_{(n)},\otimes_{i=1}^d\marg{\P_{(n)}}{i}\big)\\
    &
    \jover{c}{=}\MMD\big(\P_{(n)},\otimes_{i=1}^{d}\P_i\big)
    \jover{d}{=}\norm{\mu_K({\P_{(n)}})-\mu_K({\P_0})}{\H_{K}}\\
    &\jover{e}{=}\norm{\mu_K({\P_0})+\varepsilon_n\E_{\P_0}[K(\cdot,X)\varphi(X)] - \mu_K({\P_0})}{\H_{K}}\\
    &\jover{f}{=}\varepsilon_n\underbrace{\norm{\E_{\P_0}[K(\cdot,X)\varphi(X)]}{\H_{K}}}_{\eqcolon C_\varphi}\jover{g}{>} 0, \label{eq:bound_hsic}
\end{align}
where (a) follows from the fact that  $\HSIC(\P_0)=0$ (since $\P_0 =\otimes_{i=1}^d \P_i$) and the non-negativity of $\HSIC$, (b) comes from the definition of $\HSIC$ in \eqref{eq:hsic-definition}, (c) is by Lemma~\ref{lem:marginals_pn}, and (d) holds by the definition of $\MMD$ together with the fact that $\P_0 = \otimes_{i=1}^d\P_i$ by definition. Lemma~\ref{lem:mu_k-perturbed} yields (e), while simplification and the absolute homogeneity of norms with $\varepsilon_n>0$ yield (f). Lastly, Assumption~\ref{as:hics}(i), $\varepsilon_n>0$, and the fact that $\P_{(n)}\neq \P_0$ (by Lemma~\ref{lem:p_n-neq-p_0}) guarantee that $\HSIC(\P_{(n)})>0$; all steps being equalities this shows (g) and implies also that $C_{\varphi}>0$. Hence,
\begin{align}
    \HSIC(\P_{(n)}) = \varepsilon_n C_\varphi \jover a = cn^{-1/2}C_\varphi \jover b = 2Bn^{-1/2}, \label{eq:HSIC>2s}
\end{align}
where (a) comes from the definition of $\varepsilon_n$ and (b) follows by introducing the notation $B\coloneq c C_\varphi/2>0$.

We now proceed by \tb{bounding the divergence $\KL(\P_{\theta_{2}}\|\P_{\theta_{1}})$.} Recall that $\varphi \in \mC_b(\X)$, $\E_{\P_0}[\varphi(X)] = 0$, and $\varphi(x) \neq 0$ for all $x$ in some set $A\in \mathcal B(\X)$ with positive $\P_0$-measure. Then,
\begin{align}
    \KL(\P_{(n)}^n\|\P^n_0) \jover{a}{=}n\KL(\P_{(n)}\|\P_0) \jover {b}{\leq} n\varepsilon_n^2 \alpha_0\jover {c}{=} n\left(cn^{-1/2}\right)^2 \alpha_0 \jover {d}{\eqcolon} \alpha < \infty,\label{eq:KLboundHSIC}
\end{align}
where Lemma~\ref{lem:tsy-kl-prod-measure} gives (a). Notice that by Lemma~\ref{lem:kl-bound-perturb} there exist $\alpha_0>0$ and $n_{0,2}\in\Np$ such that, for all $n\geq n_{0,2}$, (b) holds.
(c) follows from the definition of $\varepsilon_n$, (d) comes by simplification and by introducing $\alpha \coloneq c^2\alpha_0 >0$ where the positivity follows as $c>0$ and $\alpha_0>0$.

Then, taking $n_0=\max(n_{0,1},n_{0,2})$, we invoke Theorem~\ref{th:le-cam}---validated in \eqref{eq:HSIC>2s} and \eqref{eq:KLboundHSIC}---to obtain that, for all $n\geq n_0$,
\begin{align}
    \inf_{\hat F_n} \sup_{\P\in\mP(K;\X)}\P^n\Big( \Big|\HSIC(\P)-\hat F_n\Big|\geq Bn^{-1/2} \Big) \ge f(\alpha)>0,
\end{align}
with $f(\alpha)=\max\{\exp(-\alpha)/4,(1-\sqrt{\alpha/2})\}$. This proves the statement.

\subsection{Proof of Corollary~\ref{cor:cross_cov}}
We adjust the proof strategy of the proof of Theorem~\ref{th:hsic} (Section~\ref{sec:proof_th2}) to the setting of the corollary.
Specifically, we now consider the parameter space $\Theta \coloneq \{\theta_\P \coloneq C_K(\P):\ \P\in\mP(K;\X)\}$ [with $C_K$ defined in \eqref{eq:cov-op-definition}], define $F:\mP(K;\X)\to\R$ as $\P \mapsto C_K(\P)$, $\hat F_n$ a corresponding estimator using $n$ samples, and use the distance $d(\theta_1,\theta_2) \coloneq \norm{\theta_1-\theta_2}{\H_K}$ ($\theta_1,\theta_2\in\Theta$). The remaining part of the setup is as in Section~\ref{sec:proof_th2}. As $\KL(\P_{(n)}^n\|\P^n_0) \le \alpha < \infty$ with $\alpha> 0$ was proved in \eqref{eq:KLboundHSIC}, it remains to establish a lower bound on $d(\theta_1(n),\theta_2(n))=d(\theta_{\P_0},\theta_{\P_{(n)}})=\norm{C_K(\P_0)-C_K(\P_{(n)})}{\H_K}$. Indeed, \begin{align}
    \norm{C_K(\P_0)-C_K(\P_{(n)})}{\H_K}&\jover a\geq \big|\underbrace{\norm{C_K(\P_0)}{\H_K}}_{= \HSIC(\P_0)}-\underbrace{\norm{C_K(\P_{(n)})}{\H_K}}_{=\HSIC(\P_{(n)})}\big|\\&\jover b = \big|\HSIC(\P_0)-\HSIC(\P_{(n)})\big| \jover c = 2Bn^{-1/2}\jover c >0,
\end{align}
where (a) is by the reverse triangle inequality, (b) holds by the definition of the centered cross-covariance operator [\eqref{eq:cov-op-definition}], and (c) follows from \eqref{eq:bound_hsic} and \eqref{eq:HSIC>2s}. We conclude as in the proof of Theorem~\ref{th:hsic} to obtain the statement.

\subsection{Proof of Theorem~\ref{th:ksd}}

Observe that, for a $\P'_0\in\mathcal T$ defined as in Assumption~\ref{as:ksd} (written as $\P_0 \in \mathcal T$ there), we have
\begin{align}
    \MoveEqLeft\inf_{\hat{F}_n} \sup_{\P_0 \in \mathcal T}\sup_{\P\in \mP(K_{\P_0};\X)} \P^n\Big(\left|\KSD(\P_0,\P)-\hat F_n\right|\geq C\Big)\\
    &\jover{a}{\geq} \inf_{\hat{F}_n} \sup_{\P_0\in \left\{\P'_0\right\}}\sup_{\P\in \mP\left(K_{\P_0};\X\right)} \P^n\Big(\left|\KSD(\P_0,\P)-\hat F_n\right|\geq C\Big)
    \\&\jover{b}{=}  \inf_{\hat{F}_n} \sup_{\P\in \mP\left(K_{\P'_0};\X\right)} \P^n\Big(\left|\KSD(\P'_0,\P)-\hat F_n\right|\geq C\Big), \label{eq:ksd_reduction}
\end{align}
where (a) comes by the fact that $\{\P'_0\}\subseteq \mathcal T$ and (b) follows by noting that the supremum of a singleton is attained at its element.  In the following, we relabel $\P_0'$ as $\P_0$; in other words, we write $\P_0 = \P'_0$.

To bring ourselves into the setting of Theorem~\ref{th:le-cam}, for any fixed $n\in \Np$, set $\Y \coloneq \X^n$, $\Theta \coloneq \{\theta_\P \coloneq \KSD(\P_0,\P)\,:\, \P \in \mP(K_{\P_0};\X)\}$, $\mathcal P_{\Theta} \coloneq \{\P^n\,:\, \P\in  \mP(K_{\P_0};\X)\} = \{\P^n\,:\, \theta_\P \in \Theta\}$, and $d(x,y) \coloneq |x-y|$ ($x,y\in \R$). Let us define $F: \mP(K_{\P_0};\X) \to \R$ by $\P\mapsto \KSD(\P_0,\P)$, and let $\hat{F}_n$ denote the corresponding estimator based on $n$ samples. We construct $(\P_{\theta_{0}(n)},\P_{\theta_{1}(n)})$ for fixed $n$, where $\P_{\theta_{0}(n)} \coloneq \P_{\theta_0}$ with $\theta_0 \coloneq \theta_{\P_0}$, $\P_{\theta_1(n)} \coloneq \P_{\theta_n}$ with $\theta_n \coloneq \theta_{\P_{(n)}}$ and $\P_{(n)}\in\mP(K_{\P_0};\X)$ specified below in \eqref{eq:defq1}. With these notations at hand, $d(\theta_0(n),\theta_1(n))=|\KSD(\P_0,\P_0)-\KSD(\P_0,\P_{(n)})| = |0- \KSD(\P_0,\P_{(n)})| = \KSD(\P_0,\P_{(n)})$.

Next, we present the \tb{construction of the adversarial sequence $\P_{(n)}$}.
Let $\varphi\in \mathcal C_b(\X)$ be as constructed in Lemma~\ref{lem:phi-exists} [valid by Assumption~\ref{as:ksd}(ii)], that is, (i) satisfying $\E_{\P_0}[\varphi(X)] = 0$ and (ii) guaranteeing that there exists $A'\in\mathcal B(\X)$ with positive $\P_0$-measure such that $\varphi(x) \neq 0$ for all $x\in A'$. We construct $\P_{(n)}$ as a  perturbation of $\P_0$ taking the form
\begin{align}
\P_{(n)}(A) = \int_A1 + \varepsilon_n\varphi(x)\d \P_0(x) \text{ for any } A \in \mathcal B(\X),
\label{eq:defq1}
\end{align}
with $\varepsilon_n = c n^{-1/2}$ and $c>0$; we also note that $\P_{(n)}\neq \P_0$ (by Lemma~\ref{lem:p_n-neq-p_0}), and that there exists an $n_{0,1}\in \Np$ such that $\P_{(n)}\in\mP(K_{\P_0};\X)$ for all $n\geq n_{0,1}$ [by Lemma~\ref{lem:p_n-and-q_m-inP}; valid as $\P_0\in\mP(K_{\P_0};\X)$ by \eqref{eq:Ep_0K_0=0}].

Having defined $\P_{(n)}$, we continue with the \tb{control of the KSD value $\KSD\left(\P_0,\P_{(n)}\right)$}:
\begin{align}
    \KSD(\P_0,\P_{(n)}) &\overset{\eqref{eq:ksd_def}}{=} \big\|\E_{\P_{(n)}}[\sfm(X)]\big\|_{\H} \jover{a}{=} \Big\|\E_{\P_0}\Big[\sfm(X)\big(1 + \varepsilon_n\varphi(X)\big)\Big]\Big\|_\H\\
    &\,\jover{b}{=} \big\|\underbrace{\E_{\P_0}[\sfm(X)}_{=0 \impliedby \eqref{eq:sfm=0}}] + \varepsilon_n\E_{\P_0}[\varphi(X)\sfm(X)]\big\|_\H \\&\jover{c}{=} \varepsilon_n\underbrace{\big\|\E_{\P_0}[\varphi(X)\sfm(X)]\big\|_\H}_{\eqcolon C_{\varphi}} \jover{d}{>} 0, \label{eq:ksd_control}
\end{align}
where in (a) we used the definition of $\P_{(n)}$ and the property of the Radon-Nikodym derivative, (b) holds by the linearity of the expectation, (c) is implied by the positive homogeneity of norms combined with $\varepsilon_n>0$,  and (d) follows from the fact that $\varepsilon_n > 0$ and that by $\P_{(n)} \neq \P_0$ we have $\KSD(\P_0,\P_{(n)}) > 0$ by the validity of $\KSD$ imposed in Assumption~\ref{as:ksd}(i). Hence,
\begin{align}
\KSD\left(\P_0,\P_{(n)}\right) & = \varepsilon_n C_\varphi \stackrel{\text{(a)}}{=} \Theta\!\left(n^{-1/2}\right), \label{eq:KSD-bound}
\end{align}
where (a) holds by $\varepsilon_n = c n^{-1/2}$ ($c>0$) and  $C_\varphi > 0$.

We proceed by \tb{controlling the KL divergence $\KL\left(\P_{(n)} \| \P_0\right)$}:
Recall that $\varphi \in \mC_b(\X)$, $\E_{\P_0}[\varphi(X)] = 0$, and $\varphi(x) \neq 0$ for all $x$ in some set $A'\in \mathcal B(\X)$ with positive $\P_0$-measure. Then,
\begin{align}
    \KL(\P_{(n)}^n\|\P^n_0) \jover{a}{=}n\KL(\P_{(n)}\|\P_0) \jover {b}{\leq} n\varepsilon_n^2 \alpha_0\jover {c}{=} n\left(cn^{-1/2}\right)^2 \alpha_0 \jover {d}{\eqcolon} \alpha < \infty,\label{eq:ksd_kl_bound}
\end{align}
where Lemma~\ref{lem:tsy-kl-prod-measure} gives (a). Notice that by Lemma~\ref{lem:kl-bound-perturb} there exist $\alpha_0>0$ and $n_{0,2}\in\Np$ such that, for all $n\geq n_{0,2}$, (b) holds. (c) follows from the definition of $\varepsilon_n$, (d) comes by simplification and by introducing $\alpha \coloneq c^2\alpha_0 >0$ where the positivity follows as $c>0$ and $\alpha_0>0$.

Hence, taking $n_0 = \max(n_{0,1},n_{0,2})$, we invoke Theorem~\ref{th:le-cam} [valid by \eqref{eq:ksd_control} and \eqref{eq:ksd_kl_bound}] to obtain that, for all $n\geq n_0$,
\begin{align}
    \inf_{\hat{F}_n} \sup_{\P\in \mP(K_{\P_0};\X)} \P^n\Big(\left|\KSD(\P_0,\P)-\hat F_n\right|\geq C\Big) \geq f(\alpha) > 0, \label{eq:ksd_rhs_bound}
\end{align}
with $f(\alpha)=\max\{\exp(-\alpha)/4,(1-\sqrt{\alpha/2})\}$. Combining \eqref{eq:ksd_reduction} and \eqref{eq:ksd_rhs_bound}, while recalling that we took $\P_0 = \P_0'$ in \eqref{eq:ksd_rhs_bound}, proves the statement.

\section{Auxiliary Results}\label{sec:aux}

In this section we introduce some auxiliary results, used in the proofs of the main results.
Lemma~\ref{lem:phi-not-a.s.c.} provides conditions for a function to not be a.s.\ constant. Lemma~\ref{lem:p_n-and-q_m-inM} shows that perturbed probability measures are probability measures for $n\in\Np$ large enough; Lemma~\ref{lem:p_n-and-q_m-inP} extends this result to $\mP(K;\X)$. Lemma~\ref{cor:p_n-neq-q_m} ensures that different types of perturbed measures are not equal, Lemma~\ref{lem:mu_k-perturbed} shows that the mean embedding of perturbed measures enjoys a simple form, Lemma~\ref{lem:kl-bound-perturb} bounds the KL divergence between perturbed and unperturbed measures, and Lemma~\ref{lem:open-A} collects a key property of the perturbation functions used.  Lemma~\ref{lem:phi-exists-X1xX2} is about the existence of perturbation functions in product spaces, while Lemma~\ref{lem:marginals_pn} makes the marginals of perturbed product measures explicit.

\begin{lemmaA}[Conditions for a function to not be $\P$-a.s.\ constant]\label{lem:phi-not-a.s.c.}
    Let $(\X,\tau_{\X})$ be a topological space and $f\in\mC(\X)$ such that there exist $x \ne y\in\X$ for which $f(x)\neq f(y)$. Assume that $\P\in\M_1^+(\X)$ has full support. Then, $f$ is not $\P$-a.s.\ constant.
\end{lemmaA}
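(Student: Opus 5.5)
We argue by contradiction. Suppose that $f$ is $\P$-a.s.\ constant, that is, there is a $c\in\R$ with $\P(\{z\in\X:\,f(z)\neq c\})=0$. Since $f(x)\neq f(y)$, at least one of $f(x)$, $f(y)$ differs from $c$. Without loss of generality we take it to be $f(x)$; otherwise we swap the roles of $x$ and $y$. The aim is to produce a nonempty open set of $\P$-measure zero that contains $x$, and then contradict full support.

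\textbf{Step 1 (an open set from continuity).} Consider $U\coloneq f^{-1}\big(\R\setminus\{c\}\big)$. The set $\R\setminus\{c\}$ is open in $\R$ and $f\in\mC(\X)$, so $U\in\tau_{\X}$; in particular $U\in\mathcal B(\X)$. Moreover $x\in U$, so $U\neq\emptyset$. By the a.s.\ constancy assumption, $\P(U)=\P(\{z:\,f(z)\neq c\})=0$.

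\textbf{Step 2 (contradiction with full support).} Let $C_1\coloneq\X\setminus U$. This set is closed and satisfies $\P(\X\setminus C_1)=\P(U)=0$. By property (ii) in the definition of the support, $\supp(\P)\subseteq C_1=\X\setminus U$. Full support means $\supp(\P)=\X$, which gives $\X\subseteq\X\setminus U$, i.e.\ $U=\emptyset$. This contradicts $x\in U$.

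The only delicate point is in Step 2. The argument must use the paper's definition of support as the smallest closed set with null complement, and it relies on that support being well-defined, which is what "full support" presupposes. Everything else is a routine preimage-of-open-set argument.
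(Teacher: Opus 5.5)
Your proof is correct and follows essentially the same route as the paper. The key step is that $f^{-1}(\R\setminus\{c\})$ is a nonempty open set, and under full support the minimality of $\supp(\P)$ rules out a nonempty open set of measure zero; the paper argues directly that $\P(f^{-1}(\{c\}))<1$ for every $c$, and places your Step~2 in a footnote, rather than arguing by contradiction.
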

\begin{proof}
   By definition, $f$ is $\P$-a.s.\ constant iff $\P\big(\{x\in \X : f(x) = c \}\big) = \P\big(f^{-1}(C)\big)=1$ for some $c\in\R$ with $C=\{c\}$. We prove that this does not hold for any $C=\{c\}$ with $c\in\R$, showing the statement. Taking an arbitrary $C=\{c\}$, with $c\in\R$, $f^{-1}\big(\R\setminus C\big)\subseteq \X$ is open (as $f$ is continuous and $\R\setminus C$ is open) and non-empty (as there exist $x\neq y\in\X$ with different images); therefore $0 < \P\Big( f^{-1}\big(\R\setminus C\big)\Big)$ as $\supp(\P)=\X$.\footnote{Note that if $A$ is a non-empty open  set and $\supp(\P)=\X$, then $\P(A)>0$. Indeed, we argue by contradiction: we assume that $\P(A)=0$. This implies that $A\jover a \subseteq \X\setminus \supp(\P) \jover b= \emptyset$, where (a) comes by the definition of support, and (b) holds as $\supp(\P)=\X$. Therefore, as $A$ is non-empty, we reached a contradiction.}
   Then,\begin{align}
      0 &< \P\Big( f^{-1}\big(\R\setminus C\big)\Big) \jover a =  \P\Big( f^{-1}(\R)\setminus f^{-1}(C)\Big) = \P\Big(\X\setminus f^{-1}(C)\Big)
      \jover b = \P(\X) - \P\Big(\X \cap f^{-1}(C)\Big) \notag\\
      &\jover c = 1 - \P\Big(f^{-1}(C)\Big) \implies \P\Big(f^{-1}(C)\Big) < 1,
   \end{align}
    where (a) follows from the fact that the pre-image of the set difference is the set difference of pre-images [$f^{-1}(A\setminus B) = f^{-1}(A) \setminus f^{-1}(B)$], (b) is implied  by using $\P(A\setminus B) = \P(A)-\P(A\cap B)$ and (c) comes from the fact that $\P(\X)=1$ and $f^{-1}\big(C\big)\subseteq \X$.
\end{proof}

The following lemma extends an intermediate step in \citet[Section~A.5]{cribeiro26minimax} from $b_n > 0$ ($\epsilon_n > 0$ therein) to $b_n \in \R$.

\begin{lemmaA}[Perturbed measures are in $\M_1^+(\X)$]\label{lem:p_n-and-q_m-inM}
     Let $(\X,\tau_\X)$ be a topological space, $\P_0 \in \mathcal M_1^+(\X)$, and $\varphi \in \mC_b(\X)$ such that $\E_{\P_0}[\varphi(X)]=0$ and $\varphi \neq 0$ on a set of positive $\P_0$-measure. Further, consider the sequence of measures $(\P_{(n)})_{n\in\Np}$, each defined as
     \begin{align}
     \P_{(n)}(A) = \int_A 1+b_n\varphi(x) \d\P_0(x)
     \end{align}
for $A\in \mathcal B(\X)$ with $b_n\in\R$ and $b_n\xrightarrow{n\to \infty}0$. Then, there exists an $n_0\in\Np$ such that for all $n\geq n_{0}$, one has that (i) $1+b_n\varphi(x)\ge 0$ for all $x\in \X$ and (ii) $\P_{(n)}\in\M_1^+(\X)$.
\end{lemmaA}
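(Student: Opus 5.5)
The argument rests on the boundedness of $\varphi$ together with $b_n\to 0$. Since $\varphi\in\mC_b(\X)$, the quantity $M\coloneq\sup_{x\in\X}|\varphi(x)|$ is finite. If $M=0$, then $\varphi\equiv0$, which contradicts $\varphi\neq0$ on a set of positive $\P_0$-measure; hence $M>0$. Because $b_n\to0$, there is an $n_0\in\Np$ with $|b_n|\le 1/(2M)$ for all $n\ge n_0$. For such $n$ and every $x\in\X$ we get
\begin{align}
1+b_n\varphi(x)\ge 1-|b_n|M\ge 1/2\ge0,
\end{align}
which proves (i). This bound holds regardless of the sign of $b_n$, and that is exactly what allows the extension from $b_n>0$ to $b_n\in\R$.

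For (ii), fix $n\ge n_0$. The density $g_n(x)\coloneq 1+b_n\varphi(x)$ is Borel measurable because $\varphi$ is continuous. It is nonnegative by (i), and it is bounded by $1+|b_n|M$, so it lies in $\mathcal L_1(\X,\P_0)$. A set function defined by integrating a nonnegative integrable density against a measure is a finite measure. Concretely, $\P_{(n)}(\emptyset)=0$, $\P_{(n)}(A)\ge0$ by (i), and countable additivity follows from the monotone convergence theorem applied to $g_n\mathds 1_{\cup_k A_k}=\sum_k g_n\mathds 1_{A_k}$ for disjoint $A_k\in\mathcal B(\X)$.

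It then remains to check normalization:
\begin{align}
\P_{(n)}(\X)=\int_\X 1\,\d\P_0(x)+b_n\E_{\P_0}[\varphi(X)]=1+0=1.
\end{align}
This uses linearity of the integral, which is legitimate because both $1$ and $\varphi$ are bounded and therefore $\P_0$-integrable, together with the assumption $\E_{\P_0}[\varphi(X)]=0$. Hence $\P_{(n)}\in\M_1^+(\X)$ for all $n\ge n_0$.

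I do not expect a real obstacle here. The only points that need care are ensuring $M>0$ before dividing by it, which is where the nonvanishing assumption enters, and noting that measurability of $g_n$ comes from the continuity of $\varphi$.
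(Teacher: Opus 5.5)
Your proof is correct and follows essentially the same route as the paper: you bound $|b_n\varphi(x)|\le |b_n|\sup_{x\in\X}|\varphi(x)|$ and use $b_n\to 0$ to make the density $1+b_n\varphi$ nonnegative eventually, regardless of the sign of $b_n$, and you get $\P_{(n)}(\X)=1$ from linearity and $\E_{\P_0}[\varphi(X)]=0$. Your extra checks (measurability of the density, countable additivity via monotone convergence, and $M>0$) are details the paper leaves implicit, and the $M>0$ step can be dropped altogether by requiring $|b_n|M\le 1/2$ eventually, as the paper does with $|b_n|U\le 1$.
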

\begin{proof}
    We need to prove the existence of $n_0\in\Np$ such that for all $n\geq n_0$, one has $\P_{(n)}\in\M_1^+(\X)$; in other words $\P_{(n)}\geq 0$ (which will follow from the intermediate result $1+b_n \varphi\ge 0$) and $\P_{(n)}(\X)=1$.\begin{enumerate}
        \item $\P_{(n)}\geq 0$ for $n\geq n_0$: By the definition of $\P_{(n)}$, it suffices to show that for $n$ large enough
        \begin{align}
            1+b_n\varphi(x) \geq 0\iff -b_n\varphi(x) \leq 1,
        \end{align}
        for all $x\in\X$. Note that $-b_n\varphi(x) \leq 1$ is implied if  $|b_n\varphi(x)|\leq 1$. Since $|\varphi|\in\mC_b(\X)$ [as $\varphi\in\mC_b(\X)$], we have that $|b_n\varphi(x)|\leq |b_n|U\xrightarrow{n\to\infty}0$ using that $b_n \xrightarrow{n\to \infty}0$, with $U\coloneq\sup_{x\in\X}|\varphi(x)|<\infty$. In particular, this implies the existence of an $n_0\in\Np$ such that for all $n\geq n_0$ and for all $x \in \X$, one has $|b_n\varphi(x)|\leq|b_n|U\leq 1$.
        \item $\P_{(n)}(\X) = 1$: Consider
        \begin{align}
            \P_{(n)}(\X) = \int_\X 1+b_n\varphi(x)\d\P_0(x) \jover{a}= \underbrace{\int_\X 1\ \d\P_0(x)}_{=\P_0(\X)} + b_n\underbrace{\int_\X\varphi(x)\d\P_0(x)}_{=\E_{\P_0}[\varphi(X)] \jover{b}{=} 0} \jover{c}= 1,
            \end{align}
            where (a) comes from the linearity of the integral, (b) follows from the assumption imposed on $\varphi$, and (c) holds by the fact that $\P_0\in \M_1^+(\X)$.\qedhere
    \end{enumerate}
\end{proof}

\begin{lemmaA}[Perturbed measures are in $\mP(K;\X)$]\label{lem:p_n-and-q_m-inP}
Let $K:\X^2\to \R$ be a kernel. Assume that  there exists $\P_0\in \mP(K;\X)$ and $\varphi\in\mC_b(\X)$ such that $\E_{\P_0}[\varphi(X)]=0$ and $\varphi(x)\neq 0$ on a set of positive $\P_0$-measure. Consider the sequences of measures $(\P_{(n)})_{n\in\Np}$ and  $(\Q_{(m)})_{m\in\Np}$ on $\big(\X,\mathcal B(\X)\big)$ defined as \begin{align}
    \P_{(n)}(A) = \int_{A} 1+\varepsilon_n\varphi(x)\ \d\P_0(x) \quad \text{ and } \quad \Q_{(m)}(A) = \int_{A} 1- \nu_m\varphi(x)\ \d\P_0(x),
\end{align}
with $\varepsilon_n=c_1n^{-1/2}>0$ and $\nu_m=c_2m^{-1/2}>0$ for some constants $c_1,c_2>0$ and for all $n,m\in\Np$. Then there exists $n_0\in\Np$ such that for all $n\geq n_{0}$ one has $\P_{(n)}\in\mP(K;\X)$ and there exists $m_{0}\in \Np$ such that for all $m\geq m_{0}$ it holds that $\Q_{(m)}\in\mP(K;\X)$.
\end{lemmaA}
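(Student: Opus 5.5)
We reduce to Lemma~\ref{lem:p_n-and-q_m-inM} and then verify the integrability condition defining $\mP(K;\X)$. We treat $\P_{(n)}$ with $b_n\coloneq\varepsilon_n=c_1n^{-1/2}$ and $\Q_{(m)}$ with $b_m\coloneq-\nu_m=-c_2m^{-1/2}$. In both cases $b_n\in\R$ and $b_n\to0$. This is exactly why Lemma~\ref{lem:p_n-and-q_m-inM} was stated for real-valued, possibly negative, $b_n$: the two sequences are handled by the same argument. Since $\mP(K;\X)\subseteq\M_1^+(\X)\subseteq\M_1^+(\X)$ is assumed of $\P_0$, Lemma~\ref{lem:p_n-and-q_m-inM} applies. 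It yields $n_{0}$ (resp.\ $m_{0}$) such that for all larger indices the density $1+\varepsilon_n\varphi$ (resp.\ $1-\nu_m\varphi$) is nonnegative and $\P_{(n)},\Q_{(m)}\in\M_1^+(\X)$.

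It remains to check that $\int_\X\sqrt{K(x,x)}\,\d\P_{(n)}(x)<\infty$, and similarly for $\Q_{(m)}$. Because $\P_{(n)}\ll\P_0$ with density $1+\varepsilon_n\varphi$, we can integrate the nonnegative measurable function $\sqrt{K(x,x)}$ against this density. Nonnegativity of the density for $n\ge n_0$ gives $|1+\varepsilon_n\varphi|=1+\varepsilon_n\varphi\le 1+\varepsilon_n U$, where $U\coloneq\sup_{x\in\X}|\varphi(x)|<\infty$ because $\varphi\in\mC_b(\X)$. Hence
\begin{align}
\int_\X\sqrt{K(x,x)}\,\d\P_{(n)}(x)=\int_\X\sqrt{K(x,x)}\big(1+\varepsilon_n\varphi(x)\big)\d\P_0(x)\le(1+\varepsilon_nU)\int_\X\sqrt{K(x,x)}\,\d\P_0(x)<\infty,
\end{align}
where the last integral is finite because $\P_0\in\mP(K;\X)$. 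The argument for $\Q_{(m)}$ is identical with $\nu_m$ in place of $\varepsilon_n$.

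The only step needing care is the change of measure $\int g\,\d\P_{(n)}=\int g\,(1+\varepsilon_n\varphi)\,\d\P_0$ for nonnegative measurable $g$. This is the standard Radon--Nikodym identity: it holds for indicators by the definition of $\P_{(n)}$, then for simple functions, then for nonnegative measurable $g$ by monotone convergence. We also need $\sqrt{K(x,x)}$ to be measurable, which follows from the standing assumption that $K$ is Borel-measurable, since $x\mapsto(x,x)$ is measurable. With these in place, the proof is immediate. Note that nonnegativity of the density is used to avoid any issue with signed measures; this is why the result holds only for $n\ge n_0$ and $m\ge m_0$.
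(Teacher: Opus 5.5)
Your proposal is correct and follows essentially the same route as the paper: you invoke Lemma~\ref{lem:p_n-and-q_m-inM} (with $b_n=\varepsilon_n$ and $b_m=-\nu_m$) for membership in $\M_1^+(\X)$, then bound the density by $1+\varepsilon_n U$ so that finiteness of $\int_\X\sqrt{K(x,x)}\,\d\P_0(x)$ carries over. The only cosmetic differences are that you take $U=\sup_{x\in\X}|\varphi(x)|$, which handles $\Q_{(m)}$ directly rather than via the paper's substitution $\varphi\mapsto-\varphi$, and that you spell out the change-of-measure and measurability details the paper leaves implicit.
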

\begin{proof}
Recalling the definition of $\mP(K;\X)$, we need to prove that $\P_{(n)},\Q_{(m)}\in\M_1^+(\X)$, and $\int_\X \norm{K(\cdot,x)}{\H_K}\ \d \P_{(n)}(x)<\infty$ and $\int_\X \norm{K(\cdot,x)}{\H_K}\ \d \Q_{(m)}(x)<\infty$ for $n$ and $m$ large enough. We show these properties separately.
\begin{itemize}
     \item \tb{$\P_{(n)},\Q_{(m)} \in\M_1^+(\X)$ for $n\geq n_0$ and $m\geq m_0$}: By Lemma~\ref{lem:p_n-and-q_m-inM} (invoked independently for both $\P_{(n)}$ and $\Q_{(m)}$), there exists $n_0,m_0\in\Np$ such that, for all $n\geq n_0$ and $m\geq m_0$, $\P_{(n)},\Q_{(m)}\in\M_1^+(\X)$.
     \item \tb{$\int_\X \norm{K(\cdot,x)}{\H_K} \d \P_{(n)}(x)<\infty$ for $n\ge n_{0}$, $\int_\X \norm{K(\cdot,x)}{\H_K} \d \Q_{(m)}(x)<\infty$ for $m\ge m_{0}$}:                 Indeed, consider \begin{align}
                    \int_\X \norm{K(\cdot,x)}{\H_K}\ \d\P_{(n)}(x) &\jover a = \int_{\X} \big(1+\varepsilon_n \varphi(x)\big) \norm{K(\cdot,x)}{\H_K}\d \P_0(x)\\
                    &\jover {b} \leq \int_\X \big(1+\varepsilon_nU\big)\norm{K(\cdot,x)}{\H_K} \ \d \P_0(x) \\
                    & \jover c = (1+\varepsilon_n U)\int_\X \norm{K(\cdot,x)}{\H_K}\ \d \P_0(x) \jover d < \infty,
                \end{align}
                where (a) stems from the fact that $\frac{\d \P_{(n)}}{\d\P_0} = 1+\varepsilon_n\varphi$ and the properties of the Radon-Nikodym derivative. Observe that
                \begin{align}
                &\big(1+\varepsilon_n\varphi(x)\big) \norm{K(\cdot,x)}{\H_K} \leq (1+\varepsilon_n U)\norm{K(\cdot,x)}{\H_K},\label{eq:bound_epsilon} \\
                \end{align}
                where $U=\sup_{x\in\X}\varphi(x)<\infty$ [the finiteness of $U$ is guaranteed by $\varphi \in \mC_b(\X)$].
                Then, \eqref{eq:bound_epsilon}, together with the monotonicity of the integral, yield (b). Finally, the homogeneity of the integral gives (c), while the fact that $\P_0\in\mP(K;\X)$ yields (d). The statement regarding $\Q_{(m)}$ follows by substituting $(\varepsilon_n,\varphi)$ by $(\nu_m,-\varphi)$ while noting that (i) $-\varphi\in\mC_b(\X)$ [as $\varphi\in\mC_b(\X)$], (ii) $\E_{\P_0}[-\varphi(\X)]=0$ (by linearity of the expectation), and (iii) $\{x\in\X:~\varphi(x)\neq 0\} = \{x\in\X:~-\varphi(x)\neq 0\}$.\qedhere

        \end{itemize}
\end{proof}

\begin{lemmaA}[Perturbed measures are distinct among themselves]\label{cor:p_n-neq-q_m}
Let $(\X,\tau_\X)$ be a topological space, $\P_0 \in \mathcal M_1^+(\X)$, $\varphi\in\mathcal C_b(\X)$ such that $\varphi \neq 0$ on a set of positive $\P_0$ measure. With $\varepsilon >0$
and $\nu > 0$, define the measures $\P$ and $\Q$ on $(\X,\mathcal B(\X))$ as $\P(A) = \int_{A}1+\varepsilon\varphi(x)\d \P_0(x)$ and $\Q(A) = \int_{A}1-\nu\varphi(x)\d \P_0(x)$, respectively. Then $\Q\neq \P_0$ and $\Q\neq\P$.
\end{lemmaA}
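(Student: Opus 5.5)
The plan is to exhibit, in each case, a single Borel set on which the two measures give different mass. Both inequalities come down to the fact that a nonzero real function can have both a nonzero positive part and a nonzero negative part.

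Let $A'=\{x\in\X:\varphi(x)\neq 0\}$, which is Borel because $\varphi$ is continuous, and split it as $A_+=\{\varphi>0\}$ and $A_-=\{\varphi<0\}$. By assumption $\P_0(A')>0$, so $\P_0(A_+)>0$ or $\P_0(A_-)>0$. We treat the case $\P_0(A_+)>0$; the other case is symmetric, with reversed signs.

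For $\Q\neq\P_0$ we compute
\begin{align}
\Q(A_+)-\P_0(A_+)=-\nu\int_{A_+}\varphi(x)\,\d\P_0(x).
\end{align}
The integral is strictly positive because $\varphi>0$ on $A_+$ and $\P_0(A_+)>0$: a nonnegative function that is strictly positive on a set of positive measure has positive integral. Writing $A_+=\cup_k\{\varphi>1/k\}$ and using continuity from below, some $\{\varphi>1/k\}$ has positive mass, which gives the lower bound. Since $\nu>0$, we get $\Q(A_+)<\P_0(A_+)$.

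For $\Q\neq\P$ we use the same set:
\begin{align}
\P(A_+)-\Q(A_+)=(\varepsilon+\nu)\int_{A_+}\varphi(x)\,\d\P_0(x)>0.
\end{align}
If instead $\P_0(A_-)>0$, we use $A_-$, and both differences become strictly negative.

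No additional hypothesis such as $\E_{\P_0}[\varphi(X)]=0$ or positivity of the densities is needed, since we only compare the set functions defined by these integrals; the integrals are finite because $\varphi$ is bounded. The only step needing any care is the strict positivity of $\int_{A_\pm}|\varphi|\,\d\P_0$, which the level-set argument above handles. So there is no real obstacle.
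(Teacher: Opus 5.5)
Your proof is correct, but it argues differently from the paper. The paper proceeds by contradiction and relies on the $\P_0$-a.e.\ uniqueness of Radon--Nikodym derivatives. If $\Q=\P_0$, then $1-\nu\varphi=1$ $\P_0$-a.s., so $\varphi=0$ $\P_0$-a.s. If $\Q=\P$, then $\varepsilon\varphi=-\nu\varphi$ on a set $E$ of full measure; restricting to $E\cap A$, where $A$ is the hypothesized set on which $\varphi\neq 0$, gives $\varepsilon=-\nu$, contradicting positivity.

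You instead give a direct argument with an explicit witness set, $\{\varphi>0\}$ or $\{\varphi<0\}$, on which the two set functions differ. The strict sign of $\int_{\{\varphi>0\}}\varphi\,\d\P_0$ comes from your level-set and continuity-from-below argument. In effect you have inlined the proof of the uniqueness fact the paper cites. You specialized it to the case where the difference of densities is a nonzero multiple of $\varphi$, so one set serves both claims.

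The paper's version is shorter and handles both claims with one template. Yours is self-contained and makes clear that neither $\E_{\P_0}[\varphi(X)]=0$ nor nonnegativity of the densities is needed.

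One small wording issue: your opening sentence says a nonzero function ``can have both'' a nonzero positive and a nonzero negative part. What you actually use, correctly, is that a function nonzero on a set of positive measure has its positive part \emph{or} its negative part nonzero on a set of positive measure.
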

\begin{proof}
    We prove both claims separately.
    \begin{itemize}
        \item \textbf{Case of $\Q\neq \P_0$:} We argue by contradiction, that is, we assume that $\Q = \P_0$.
            Then
                \begin{align}
                    \Q = \P_0
                    &\jover{a}{\implies} \frac{\d \P_0}{\d \P_0}(x) = \frac{\d \Q}{\d \P_0}(x) \;\P_0\text{-a.s.}
                     \\&\jover{b}{\implies} 1 = 1-\nu \varphi(x) \;\P_0\text{-a.s.} \\
                    &\jover{c}{\implies} \varphi(x) = 0 \;\P_0\text{-a.s.},
                \end{align}
                where (a) comes by the almost-sure uniqueness of the Radon-Nikodym derivative, (b) follows by the definition of $\Q$, and we used that $\nu > 0$ in (c). This contradicts the assumed properties of  $\varphi$, concluding the proof.
        \item \textbf{Case of $\Q\neq\P$:} Note that, by assumption, there exists a set $A\in\mathcal B(\X)$ such that $\P_0(A)>0$ and $\varphi(x) \neq 0$ for all $x\in A$. Again arguing by contradiction, we assume that $\Q= \P$.
         Then
         \begin{align}
         \Q = \P  &\jover{a}{\implies} \frac{\d \P}{\d \P_0}(x) = \frac{\d \Q}{\d \P_0}(x) \text{ for all } x \in E,\\
         &\jover{b}{\implies} 1 + \varepsilon\varphi(x) = 1-\nu \varphi(x)\text{ for all } x \in E,\\
          &\implies \varepsilon\varphi(x) = -\nu \varphi(x)  \text{ for all } x \in E, \\
          &\jover{c}{\implies} \varepsilon\varphi(x) = -\nu\varphi(x) \text{ for all }x\in E \cap A,\\
          &\jover{d}{\implies} \varepsilon=-\nu \jover{e}{\implies} \varepsilon<0,
         \end{align}
          where (a) comes by the almost-sure uniqueness of the Radon-Nikodym derivative with equality holding on a measurable set $E$ with $\P_0(E)=1$, (b) follows by the definitions of $\P$ and $\Q$, in (c) the equality was restricted from $E$ to $E \cap A$, where $\P_0(E\cap A)>0$. Indeed, the positivity holds as
          \begin{align}
          A &\jover{f}{=} (A \setminus E) \cup (A \cap E) \Rightarrow\\
          \P_0(A) &\jover{g}{=} \P_0(A \setminus E) + \P_0(A \cap E)\\
          \P_0(A) &\jover{h}{=} \P_0(A \cap E)
          \end{align}
          where $A$ was partitioned in (f), (g) follows from the additivity of $\P_0$. $\P_0(E)=1$ means that $\P_0(E^c)=0$, where $(\cdot)^c$ denotes the set-complement. Hence, combining the fact that $A \setminus E \subseteq E^c$ with the non-negativity and the monotonicity of $\P_0$ gives that $0\le \P_0(A \setminus E)\le \P_0(E^c) = 0$, showing that $\P_0(A \setminus E) = 0$ leading to (h); this implies the claimed positivity by using that $\P_0(A)>0$. (d) holds since $\varphi(x) \neq 0$ for all $x\in E\cap A \subseteq A$, and (e) is implied by $\nu>0$. As $\varepsilon$ was assumed to be positive, we reached a contradiction.
         \qedhere
            \end{itemize}
\end{proof}

\begin{lemmaA}[Mean embedding of perturbed probability measures] \label{lem:mu_k-perturbed}
Let $K:\X^2\to \R$ be a kernel. Consider two probability measures $\P,\Q\in\mP(K;\X)$ such that $\frac{\d \P}{\d \Q}=1 + bf$ with $f\in\mC(\X)$ and $b\in\R$. Then
    \begin{align}
        \mu_K(\P) &= \mu_K(\Q) + b\ \E_\Q\big[K(\cdot,X)f(X)\big].
    \end{align}
\end{lemmaA}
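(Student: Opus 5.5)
The plan is to compute $\mu_K(\P)$ directly from its Bochner-integral definition, then split the density $1+bf$ by linearity of the Bochner integral. Because $\frac{\d\P}{\d\Q}=1+bf$, the change-of-variables rule for Radon--Nikodym derivatives, which also holds for Bochner integrals of $\H_K$-valued functions, gives
\begin{align}
\mu_K(\P)=\int_\X K(\cdot,x)\,\d\P(x)=\int_\X K(\cdot,x)\big(1+bf(x)\big)\,\d\Q(x),
\end{align}
provided the right-hand integrand is Bochner integrable w.r.t.\ $\Q$. I would justify this rule in the usual way: it holds for simple functions, and it passes to the limit through the defining approximations of the Bochner integral. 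Alternatively, one can test against an arbitrary $h\in\H_K$. The reproducing property gives $\fip{\mu_K(\P),h}{\H_K}=\int_\X h\,\d\P=\int_\X h(1+bf)\,\d\Q$, so it suffices to identify the right-hand side as $\fip{\cdot,h}{\H_K}$ of the claimed element.

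Next, split $K(\cdot,x)(1+bf(x))=K(\cdot,x)+b\,K(\cdot,x)f(x)$ and use linearity of the Bochner integral. This gives
\begin{align}
\mu_K(\P)=\mu_K(\Q)+b\,\E_\Q\big[K(\cdot,X)f(X)\big].
\end{align}
The first term is Bochner integrable since $\Q\in\mP(K;\X)$.

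The main obstacle is showing that $x\mapsto K(\cdot,x)f(x)$ is Bochner integrable w.r.t.\ $\Q$. Here $f$ is only assumed continuous, not bounded, so $\int_\X\norm{K(\cdot,x)}{\H_K}|f(x)|\,\d\Q(x)<\infty$ is not automatic. If $b=0$ the claim is trivial, so assume $b\neq0$. I would then use that $\P\in\mP(K;\X)$, so both $\int_\X\norm{K(\cdot,x)}{\H_K}|1+bf(x)|\,\d\Q(x)=\int_\X\norm{K(\cdot,x)}{\H_K}\,\d\P(x)$ and $\int_\X\norm{K(\cdot,x)}{\H_K}\,\d\Q(x)$ are finite. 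Here $1+bf\ge0$ $\Q$-a.s.\ because it is a density. The triangle inequality $|bf|\le|1+bf|+1$ then gives
\begin{align}
\int_\X\norm{K(\cdot,x)}{\H_K}|f(x)|\,\d\Q(x)\le |b|^{-1}\Big(\int_\X\norm{K(\cdot,x)}{\H_K}\,\d\P(x)+\int_\X\norm{K(\cdot,x)}{\H_K}\,\d\Q(x)\Big)<\infty.
\end{align}

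Measurability of $x\mapsto K(\cdot,x)f(x)$ follows from measurability of the feature map, which the paper assumes implicitly via $\mu_K$ being well defined, times the measurable scalar $f$. In the paper's applications $f=\varphi\in\mC_b(\X)$ is bounded, so this step is immediate there.
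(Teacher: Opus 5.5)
Your proof is correct and follows the same route as the paper: rewrite $\E_\P[K(\cdot,X)]$ as $\E_\Q[K(\cdot,X)(1+bf(X))]$ via the Radon--Nikodym derivative, then split by linearity. Your extra argument that $x\mapsto K(\cdot,x)f(x)$ is Bochner integrable w.r.t.\ $\Q$ (via $|bf|\le|1+bf|+1$ and $\P,\Q\in\mP(K;\X)$) fills in a step the paper leaves implicit; it matters because $f$ is only assumed continuous, not bounded.
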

\begin{proof}
    By the definition of the mean embedding
    \begin{align}
        \mu_K(\P) &= \E_\P \big[K(\cdot,X)\big] \jover a = \E_\Q\Big[ K(\cdot,X)\big(1+bf(X)\big)\Big]  \jover b = \E_\Q\big[ K(\cdot,X)\big] + b\ \E_\Q\big[K(\cdot,X)f(X)\big]
        \\ &\jover c =  \mu_K(\Q) + b\ \E_\Q\big[ K(\cdot,X)f(X)\big],
    \end{align}
    where (a) comes from the fact that $\frac{\d \P}{\d \Q}(x) = 1 + b f(x)$ by assumption and the properties of the Radon-Nikodym derivative, (b) holds by the linearity of the expectation, and (c) follows from the definition of the mean embedding.
\end{proof}

\begin{lemmaA}[Bound on the KL divergence for perturbed probability measures]\label{lem:kl-bound-perturb}
    Let $\P_0\in\M_1^+(\X)$ and $\varphi\in\mC_b(\X)$ such that $\E_{\P_0}[\varphi(X)]=0$ and $\varphi \neq 0$ on a set of positive $\P_0$-measure. Let $\P_{(n)}\in\M_1^+(\X)$ such that $\frac{\d \P_{(n)}}{\d \P_0} = 1+b_n\varphi$ with $b_n\in\R$ and $b_n\xrightarrow{n\to\infty}0$. Then, there exist $\alpha>0$ and $n_0\in\Np$ such that for all $n\geq n_0$ one has  $\KL(\P_{(n)}\|\P_0) \leq b^2_n \alpha$.
\end{lemmaA}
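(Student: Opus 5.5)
The plan is to write the KL divergence explicitly through the Radon--Nikodym derivative and bound the logarithm by an elementary inequality. The boundedness of $\varphi$ will make every term uniformly controlled.

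Set $U\coloneq\sup_{x\in\X}|\varphi(x)|<\infty$, which is finite since $\varphi\in\mC_b(\X)$. Because $b_n\to 0$, I choose $n_0$ such that $|b_n|U\le 1/2$ for all $n\ge n_0$. Then $1+b_n\varphi(x)\ge 1/2>0$ for every $x\in\X$, so $\P_{(n)}\ll\P_0$ and the density is bounded away from zero. This is the same step as in Lemma~\ref{lem:p_n-and-q_m-inM}, with the threshold $1/2$ in place of $1$. By definition,
\begin{align}
\KL(\P_{(n)}\|\P_0)=\int_\X \big(1+b_n\varphi(x)\big)\ln\big(1+b_n\varphi(x)\big)\,\d\P_0(x).
\end{align}

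The key step is the elementary inequality $\ln(1+u)\le u$ for $u>-1$. Applying it pointwise, and using that $1+b_n\varphi\ge 0$, gives
\begin{align}
(1+b_n\varphi)\ln(1+b_n\varphi)\le (1+b_n\varphi)\,b_n\varphi = b_n\varphi + b_n^2\varphi^2 .
\end{align}
Integrating against $\P_0$, the linear term vanishes because $\E_{\P_0}[\varphi(X)]=0$. This yields
\begin{align}
\KL(\P_{(n)}\|\P_0)\le b_n^2\,\E_{\P_0}[\varphi^2(X)].
\end{align}
All integrands are bounded, so integrability is immediate. Equivalently, one can bound KL by the $\chi^2$-divergence, which equals $b_n^2\E_{\P_0}[\varphi^2]$.

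Finally I set $\alpha\coloneq\E_{\P_0}[\varphi^2(X)]$. It is finite because $\alpha\le U^2$. It is strictly positive because $\varphi\neq0$ on a set of positive $\P_0$-measure, so $\varphi^2>0$ there. This gives $\KL(\P_{(n)}\|\P_0)\le b_n^2\alpha$ for all $n\ge n_0$.

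There is no real obstacle. The only care needed is to choose $n_0$ so that the density is nonnegative, and in fact bounded below, so that the logarithm is well-defined and the pointwise inequality applies. The positivity of $\alpha$ is only needed because the statement asks for $\alpha>0$; if $\E_{\P_0}[\varphi^2]$ could vanish, any positive $\alpha$ would do anyway.
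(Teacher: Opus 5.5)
Your proposal is correct and follows essentially the same route as the paper: you write $\KL(\P_{(n)}\|\P_0)=\E_{\P_0}\big[(1+b_n\varphi(X))\ln(1+b_n\varphi(X))\big]$, choose $n_0$ so that $|b_n|\sup_{x\in\X}|\varphi(x)|$ is small enough for the density to be positive, apply $\ln(1+z)\le z$ pointwise, and use $\E_{\P_0}[\varphi(X)]=0$ to get the bound with $\alpha=\E_{\P_0}[\varphi^2(X)]>0$. The only difference is cosmetic: you take the threshold $1/2$ where the paper requires $|b_n|U<1$, and this changes nothing.
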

\begin{proof}
    Indeed, observe that
    \begin{align}
        \KL(\P_{(n)}\|\P_0) &\jover a = \E_{\P_{(n)}} \left[ \ln\left(\frac{\d\P_{(n)}}{\d\P_0}(X)\right) \right] \jover b = \E_{\P_0} \left[ \frac{\d\P_{(n)}}{\d\P_0}(X)\ln\left(\frac{\d\P_{(n)}}{\d\P_0}(X)\right) \right]\\
        &
        \jover c=\E_{\P_0} \Big[\big(1+b_n\varphi(X)\big)\ln\big(1+b_n\varphi(X)\big)\Big],
    \end{align}
    where (a) comes from the definition of the KL divergence, (b) is implied by the properties of Radon-Nikodym derivatives, and (c) holds by $\tfrac{\d\P_{(n)}}{\d\P_0} = 1 + b_n\varphi$.
    Recall that for any $z>-1$, $\ln(1+z)\leq z$. To be able to apply this inequality with $z = b_n\varphi(x)$, we need that $b_n\varphi(x)>-1\iff -b_n\varphi(x)<1$. In particular, if $|b_n\varphi(x)|<1$ one has that $-b_n\varphi(x) < 1$. Since $|\varphi|\in\mC_b(\X)$ [as $\varphi\in\mC_b(\X)$], we have that $|b_n\varphi(x)|\leq|b_n|U\xrightarrow{n\to\infty}0$ using the fact that $b_n \xrightarrow{n\to \infty} 0$, with $U\coloneq \sup_{x\in\X}|\varphi(x)|<\infty$. Hence, there exists an $n_0\in\Np$ such that for all $n\geq n_0$ and for all $x \in \X$ one has $|b_n\varphi(x)|\leq |b_n|U<1$.
    Therefore, taking $n\geq n_0$,
    \begin{align}
        \E_{\P_0} \Big[\big(1+b_n\varphi({X})\big)\underbrace{\ln\big(1+b_n\varphi({X})\big)}_{\leq b_n\varphi({X})}\Big]
        &\jover a \leq \E_{\P_0} \Big[\big(1+b_n\varphi({X})\big)\ b_n\varphi({X})\Big]
      \\&\jover b =  b_n\E_{\P_0} \big[\varphi({X})\big] +  b_n^2\ \underbrace{\E_{\P_0} \big[\varphi^2({X})\big]}_{\eqcolon \alpha<\infty}
        \jover c = b_n^2\alpha \jover d <\infty,
    \end{align}
    where (a) comes by the monotonicity of the integral together with the aforementioned bound on the logarithm and that $1+b_n \varphi(x)>0$ for all $x\in \X$, (b) follows by distributivity and the linearity of the expectation. The fact that $\E_{\P_0}[\varphi(X)] =0$ (by assumption) yields (c); the finiteness of $b_n$ and $\alpha$ [the latter following from $\varphi^2 \in \mathcal C_b(\X)$, implied by $\varphi \in \mathcal C_b(\X)$] yields (d).
    The property that $\alpha>0$ follows from the fact that $\varphi^2\ge 0$ is positive on a Borel set with positive $\P_0$-measure. This proves the statement.
\end{proof}

\begin{lemmaA}[Existence of an open set of positive measure for the perturbations]\label{lem:open-A}
    Let $(\X,\tau_\X)$ be a topological space, $\P\in\M_1^+(\X)$, and $\varphi\in\mC(\X)$ such that $\varphi \neq 0$ on a set of positive $\P$-measure. Then, there exists an $A\in\tau_\X$ with positive $\P$-measure, such that $\varphi(x) \neq 0$ for all $x\in A$.
\end{lemmaA}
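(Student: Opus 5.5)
The idea is to take the set where $\varphi$ is nonzero itself, and check that it is open and has positive measure. Concretely, set
\begin{align}
A \coloneq \{x\in\X:\ \varphi(x)\neq 0\} = \varphi^{-1}\big(\R\setminus\{0\}\big).
\end{align}
Since $\varphi\in\mC(\X)$ and $\R\setminus\{0\}$ is open in $\R$, the preimage $A$ is open, so $A\in\tau_\X\subseteq\mathcal B(\X)$. By construction, $\varphi(x)\neq0$ for every $x\in A$.

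Positivity of the measure then follows from monotonicity. By assumption there is a Borel set $A'$ with $\P(A')>0$ and $\varphi(x)\neq 0$ for all $x\in A'$. Every point of $A'$ lies in $A$, so $A'\subseteq A$, and hence $\P(A)\geq\P(A')>0$.

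There is no genuine obstacle. The only points needing care are that the preimage of an open set under a continuous map is open, and that this open set is automatically Borel, so $\P(A)$ is defined. If the hypothesis ``$\varphi\neq0$ on a set of positive measure'' is read as ``$\P(\{\varphi\neq0\})>0$'', the monotonicity step becomes trivial.
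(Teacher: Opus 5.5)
Your proposal is correct and is essentially the paper's proof. Both take $A=\varphi^{-1}(\R\setminus\{0\})$, get openness from the continuity of $\varphi$, and conclude $\P(A)\geq\P(A')>0$ from $A'\subseteq A$ and monotonicity.
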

\begin{proof}
    By assumption, there exists an $A'\in\mathcal B(\X)$ with $\P(A') > 0$ and such that $\varphi(x) \neq 0$ for all $x\in A'$. As $\{0\}\subset \R$ is closed, $\R\setminus\{0\}$ is open; therefore, the pre-image $\varphi^{-1}\big(\R\setminus\{0\}\big)\eqcolon A$ is open (in other words, $A\in \tau_{\X}$; by the continuity of $\varphi$), nonempty (as the existence of $x\in\mathcal X$ such that $\varphi(x)\neq 0$ was assumed). By definition $\varphi(x)\neq 0$ for all $x\in A$. Notice that
    \begin{align}
        A' \jover a \subseteq \varphi^{-1}\big(\varphi(A')\big)\jover b \subseteq A,
    \end{align}
    where (a) follows from the properties of the pre-image\footnote{Note that the image and pre-image of a map $f:\X\to\Y$ are such that $A \subseteq f^{-1}\big(f(A)\big)\subseteq \X$ \citep[Proposition~3.14]{sutherland09introduction}. Further, for $A_1\subseteq A_2\subseteq \X$ and $B_1\subseteq B_2\subseteq \Y$, we have $f(A_1)\subseteq f(A_2) \subseteq f(\X)$ and $f^{-1}(B_1)\subseteq f^{-1}(B_2) \subseteq f^{-1}(\Y)$ \citep[Exercise~3.1]{sutherland09introduction}.} and (b) from the fact that $\varphi(A')\subseteq \R\setminus\{0\}$ by definition.
    Hence, as $A'\subseteq A$ and $\P(A')>0$, we have by the monotonicity of probability measures that $\P(A)\geq \P(A')>0$.
    \end{proof}

\begin{lemmaA}[Existence of perturbation function in a product space]\label{lem:phi-exists-X1xX2}
   For $i\in[d]$, let $(\X_i,\tau_{\X_i})$ be topological spaces, and the pairs $(\P_i,\varphi_i)\in\M_1^+(\X_i)\times\mC_b(\X_i)$
   such that there exists no $c_i\in \R$ for which $\varphi_i=c_i$ holds $\P_i$-a.s. %
   Define $\X \coloneq \times_{i=1}^d\X_i$ and $\P_0 \coloneq \otimes_{i=1}^d \P_i \in\mathcal M_1^+(\mathcal X)$. Then, for each $i\in[d]$, there exists a $\varphi'_i \in \mC_b(\X_i)$ such that $\E_{\P_i}[\varphi'_i(X_i)]= 0$. Moreover, with $\varphi = \prod_{i=1}^d \varphi'_i \in\mC_b(\X)$, it holds that $\E_{\P_0}[\varphi(X)]=0$ and that $\varphi\neq 0$ on a set of positive $\P_0$-measure.
\end{lemmaA}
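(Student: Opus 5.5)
The natural choice is to center each factor: for each $i\in[d]$, set
\begin{align}
\varphi'_i \coloneq \varphi_i - \E_{\P_i}[\varphi_i(X_i)].
\end{align}
This is well defined and lies in $\mC_b(\X_i)$, because $\varphi_i$ is bounded and continuous, hence $\P_i$-integrable, and subtracting a constant preserves continuity and boundedness. Linearity of the expectation then gives $\E_{\P_i}[\varphi'_i(X_i)]=0$. Moreover, $\varphi'_i$ is not $\P_i$-a.s.\ zero. Otherwise $\varphi_i$ would equal the constant $c_i=\E_{\P_i}[\varphi_i(X_i)]$ $\P_i$-a.s., which Assumption~\ref{as:hics}(ii) excludes. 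So each set $A_i\coloneq\{x_i\in\X_i:\varphi'_i(x_i)\neq 0\}$ is Borel (in fact open, by continuity, as in Lemma~\ref{lem:open-A}) and satisfies $\P_i(A_i)>0$.

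Next I would define $\varphi(x)=\prod_{i=1}^d\varphi'_i(x_i)$. Each map $x\mapsto\varphi'_i(x_i)$ is the composition of the continuous coordinate projection with a continuous function, so $\varphi$ is continuous as a finite product of continuous functions. It is also bounded by $\prod_i\sup|\varphi'_i|$, hence $\varphi\in\mC_b(\X)$.

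For the mean, I would apply Fubini's theorem (the external result \ref{th::ex::Fubini}) to the product measure $\P_0=\otimes_{i=1}^d\P_i$; this is legitimate since $\varphi$ is bounded and measurable, hence integrable. It gives
\begin{align}
\E_{\P_0}[\varphi(X)]=\prod_{i=1}^d\E_{\P_i}[\varphi'_i(X_i)]=0,
\end{align}
and in fact a single vanishing factor already suffices.

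For the non-vanishing property, note that $\varphi(x)\neq0$ exactly when $x\in A\coloneq\times_{i=1}^d A_i$. This set is open in the product topology, hence Borel in $\X$. By the defining property of the product measure on rectangles, $\P_0(A)=\prod_i\P_i(A_i)>0$, which finishes the argument.

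The main obstacle is a measure-theoretic subtlety. For general topological spaces, the Borel $\sigma$-algebra of the product may strictly contain the product of the Borel $\sigma$-algebras, so $\P_0$ is a priori only defined on $\otimes_i\mathcal B(\X_i)$. I would handle this in two ways. First, I would work with measurable rectangles and with the $\otimes_i\mathcal B(\X_i)$-measurability of $\varphi$, which follows because each factor is measurable with respect to its coordinate. Second, I would rely on the paper's convention that $\P_0\in\M_1^+(\X)$ via the referenced Fubini result. The rectangle $A$ belongs to both $\sigma$-algebras, so its measure is unambiguous.
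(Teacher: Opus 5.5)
Your proof is correct and follows essentially the same route as the paper. Both use mean-zero factors $\varphi'_i\in\mC_b(\X_i)$ with open non-vanishing sets $A_i$ of positive $\P_i$-measure, continuity and boundedness of the product, $\E_{\P_0}[\varphi(X)]$ as a product of vanishing factor means, and positivity of $\P_0$ on the open rectangle $\times_{i=1}^d A_i$.

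The differences are minor. You construct $\varphi'_i=\varphi_i-\E_{\P_i}[\varphi_i(X_i)]$ explicitly, where the paper cites Lemma~\ref{lem:phi-exists} and Lemma~\ref{lem:open-A}. You also justify the factorization of $\E_{\P_0}[\varphi(X)]$ by Fubini and flag the product versus Borel $\sigma$-algebra issue, which is more careful than the paper's appeal to homogeneity of the expectation.
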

\begin{proof}
By our imposed assumptions, we can invoke Lemma~\ref{lem:phi-exists} and Lemma~\ref{lem:open-A} for each $i\in[d]$ to guarantee the existence of $\varphi'_i\in\mC_b(\X_i)$ such that (i) $\varphi'_i(x_i)\neq 0$ for all $x_i$ in some $A_i\in\tau_{\X_i}$ of positive $\P_i$-measure and all $i\in[d]$, and (ii)
\begin{align}\label{eq:exp_phi_i=0}
    \E_{\P_i}[\varphi'_i(X_i)] = 0\text{ for all } i\in[d].
\end{align}
We prove that $\varphi=\prod_{i=1}^d\varphi'_i\in\mC_b(\X)$ and it satisfies the two stated additional properties in what follows

\begin{itemize}
\item \tb{$\varphi=\prod_{i=1}^d\varphi'_i\in\mC_b(\X)$.} Indeed, the boundedness of $\varphi$ holds as
\begin{align}
\sup_{x \in \X} |\varphi(x)| &= \sup_{x \in \X} \left|\prod_{i=1}^d\varphi'_i(x_i)\right| \le \prod_{i=1}^d \underbrace{\sup_{x_i \in \X_i} |\varphi'_i(x_i)|}_{<\infty\, \Leftarrow\, \varphi_i' \text{ is bounded}} < \infty.
\end{align}
Note that $\varphi$ can be written as the composition $(x_1,\dots,x_d)\mapsto \big(\varphi_1(x_1),\dots,\varphi_d(x_d)\big) \mapsto \prod_{i=1}^d \varphi_i(x_i) = \varphi(x_1,\dots,x_d)$. Both maps are continuous, for the former see \citep[Exercise 11, Chapter 2]{munkres00topology}; $\varphi$ is continuous as it is the composition of continuous functions.

\item \tb{$\E_{\P_0}[\varphi(X)]=0$.}~Indeed,
\begin{align}\label{eq:exp_pk_phi=0}
    \E_{\P_0}[\varphi(X)] \jover{a}{=} \E_{\P_1}\Bigg[\cdots\E_{\P_d}\Bigg[\prod_{i=1}^d \varphi'_{i}(X_{i})\Bigg]\Bigg] \jover{b}{=} \prod_{i=1}^d \underbrace{\E_{\P_i}[\varphi'_i(X_i)]}_{\overset{\eqref{eq:exp_phi_i=0}}{=}0} = 0,
\end{align}
where (a) comes from the definition of $\varphi$ and the properties of the product measure $\P_0=\otimes_{i=1}^d\P_i$, and (b) holds by the homogeneity of the expectation.

\item \tb{$\varphi(x)\neq0$ on a set of positive $\P_0$-measure.}~Recall that $A_i\in\tau_{\X_i}$ as defined earlier for all $i\in[d]$ and define $A=\times_{i=1}^d A_i$. Then, we have that $\varphi(x)=\prod_{i=1}^d\varphi'_i(x_i)\neq 0$, for all $ x=(x_i)_{i=1}^d \in A$. Note that $A \subseteq \X $ is an open set in the product topology, and thus, $A\in\mathcal B(\X)$. Further, by the definition of  product measure, $\otimes_{i=1}^d \P_i(A)=\prod_{i=1}^d\underbrace{\P_i(A_i)}_{>0}>0$.\qedhere
\end{itemize}
\end{proof}

\begin{lemmaA}[Marginals of a perturbed measure in a product space]\label{lem:marginals_pn}
    Let $(\X_i,\tau_{\X_i})$ be topological spaces and $\P_i \in\M_1^+(\X_i)$ for all $i\in[d]$, $\P_0 \coloneq \otimes_{i=1}^d \P_i$ and  $\X\coloneq\times_{i=1}^d\X_i$. Let $\varphi \in \mC_b(\X)$ be as in Lemma~\ref{lem:phi-exists-X1xX2} %
    and let $\P\in\M_1^+(\X)$ be a probability measure taking the form
    \begin{align}
    \P(A) &= \int_A 1 + b\varphi(x)~\d\P_0(x)
    \end{align}
    for all $A\in \mathcal{B}(\X)$ with a fixed $b \in \R$. Then, $\marg{\P}{i} = \P_i$, for all $i \in [d]$.
\end{lemmaA}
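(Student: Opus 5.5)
The plan is to compute the $i$-th marginal of $\P$ directly from its definition and show that the perturbation term integrates to zero. Fix $i\in[d]$ and $A_i\in\mathcal B(\X_i)$. Write $A=\X_1\times\cdots\times\X_{i-1}\times A_i\times\X_{i+1}\times\cdots\times\X_d\in\mathcal B(\X)$. By the definition of the marginal and of $\P$,
\begin{align}
\marg{\P}{i}(A_i)=\P(A)=\int_A 1\,\d\P_0(x)+b\int_A\varphi(x)\,\d\P_0(x)=\P_0(A)+b\int_{\X}\mathds 1_{A}(x)\varphi(x)\,\d\P_0(x).
\end{align}
Since $\P_0=\otimes_{j=1}^d\P_j$ is a product measure, the first term is $\P_i(A_i)\prod_{j\neq i}\P_j(\X_j)=\P_i(A_i)$. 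It therefore remains to show that the second integral vanishes.

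For the second term, I use the explicit form $\varphi=\prod_{j=1}^d\varphi'_j$ from Lemma~\ref{lem:phi-exists-X1xX2}. Then $\mathds 1_A(x)\varphi(x)=\big(\mathds 1_{A_i}(x_i)\varphi'_i(x_i)\big)\prod_{j\neq i}\varphi'_j(x_j)$ is a product of bounded measurable functions of separate coordinates. The integrand is bounded, since each $\varphi'_j\in\mC_b(\X_j)$, so it is integrable with respect to the probability measure $\P_0$. Fubini's theorem (Theorem~\ref{th::ex::Fubini}) then factorizes the integral:
\begin{align}
\int_{\X}\mathds 1_{A}\varphi\,\d\P_0=\E_{\P_i}\big[\mathds 1_{A_i}(X_i)\varphi'_i(X_i)\big]\prod_{j\neq i}\E_{\P_j}[\varphi'_j(X_j)].
\end{align}
When $d\ge2$, the product on the right contains at least one factor $\E_{\P_j}[\varphi'_j(X_j)]$ with $j\neq i$. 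Each such factor is zero by \eqref{eq:exp_phi_i=0}, so the whole term vanishes and $\marg{\P}{i}(A_i)=\P_i(A_i)$. Since $A_i$ was arbitrary, this gives $\marg{\P}{i}=\P_i$.

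The main obstacle is this factorization step, where two details need care. First, Fubini must be justified; boundedness of the integrand, together with measurability of the coordinate projections in the product σ-algebra, handles this. Second, the argument needs $d\ge 2$. For $d=1$ the claim is false, since $\marg{\P}{1}=\P\neq\P_0$ by Lemma~\ref{lem:p_n-neq-p_0}. I would note that the HSIC setting implicitly has $d\ge2$, so the statement should be read under that convention. If needed, I would also make the Borel versus product σ-algebra issue explicit: the set $A$ is a measurable rectangle, and $\varphi$ is a product of measurable coordinate functions, so all integrals are well defined with respect to $\otimes_{j}\P_j$.
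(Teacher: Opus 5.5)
Your proof is correct and follows the paper's argument essentially verbatim. You write the marginal as the $\P$-measure of a cylinder set, identify $\P_0$ of that cylinder with $\P_i(A_i)$, and use Fubini with the product form $\varphi=\prod_j\varphi'_j$ to factor the perturbation term into a product containing some $\E_{\P_j}[\varphi'_j(X_j)]=0$ with $j\neq i$.

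Your caveat that $d\ge 2$ is needed (for $d=1$ and $b\neq 0$ the claim fails) is well taken. The paper's step (g) silently relies on the same fact, since the product over $i=2,\dots,d$ must be nonempty there.
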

\begin{proof}
    We will prove the statement for $i = 1$ without loss of generality; the remaining values of $i\in[d]$ can be reduced to this case by changing the order of the integration using Fubini's theorem [Theorem~\ref{th::ex::Fubini}; guaranteed as $\varphi \in \mC_b(\X)$]. First, recall that
    $\varphi = \prod_{i=1}^d\varphi'_i$ and \begin{align}\label{eq:exp_varphi=0_lemma_b6}
        \E_{\P_i}\big[\varphi'_i(X_i)\big] = 0,~\text{for all } i\in[d].
    \end{align}
    Then, for any $A_1\in\mathcal B(\X_1)$,
    \begin{align}
        \marg{\P}{1} (A_1)&\jover{a}{=} \int_{A_1\times\X_2\times\cdots\times\X_d}1\ \d\P(x)
        \jover b = \int_{A_1\times\X_2\times\cdots\times\X_d} 1 + b\varphi(x) \ \d\P_0(x)\\
        &\jover c = \int_{A_1\times\X_2\times\cdots\times\X_d} 1\ \d \P_0(x) + b\int_{A_1\times\X_2\times\cdots\times\X_d} \varphi(x)\ \d \P_0(x) \\
        &\jover d =  \P_1(A_1) + b\int_{A_1\times\X_2\times\cdots\times\X_d} \varphi(x)\ \d \P_0(x)\\
        &\jover e =\P_1(A_1) +  b \int_{A_1}\int_{\X_2}\cdots\int_{\X_d}\prod_{i=1}^d\varphi'_i(x_i)\ \d \P_d(x_d)\cdots\d \P_2(x_2)\d \P_1(x_1)  \\
        &\jover f = \P_1(A_1) +  b \int_{A_1}\varphi'_1(x_1)\d\P_1(x_1)\prod_{i=2}^d \underbrace{\int_{\X_i}\varphi'_i(x_i)\d\P_i(x_i)}_{\eqcolon \E_{\P_i}[\varphi'_i(X_i)]} \jover g =\P_1(A_1) + 0,
    \end{align}
    where (a) comes from the definition of marginals, (b) holds by the fact that $\frac{\d \P}{\d \P_0} = 1 + b\varphi$ and the properties of the Radon-Nikodym derivative, (c) is by the linearity of the integral, (d) holds by
    \begin{align}
   \int_{A_1\times\X_2\times\cdots\times\X_d} 1\ \d \P_0(x) &= \P_0(A_1\times\X_2\times\cdots\times\X_d) = \P_1(A_1) \prod_{i=2}^d \underbrace{\P_i(\X_i)}_{=1} = \P_1(A_1),
    \end{align}
    using the definition of $\P_0$ and that $\P_i \in \M_1^+(\X_i)$. The definition of $\varphi$ gives (e), the homogeneity of the integral yields (f),
    and \eqref{eq:exp_varphi=0_lemma_b6} implies (g).
\end{proof}

\section{External Results}\label{sec:ext}

This section collects the external statements that we use. Lemma~\ref{lem:phi-exists} ensures the existence of a perturbation function on a topological space and Lemma~\ref{lem:p_n-neq-p_0} guarantees that a perturbed measure is distinct from its original measure. Lemma~\ref{lem:tsy-kl-prod-measure} is about the Kullback-Leibler (KL) divergence of product measures. Theorem~\ref{th::ex::Fubini} and Theorem~\ref{th:le-cam} restate the well-known results of Fubini and Le Cam, respectively.

\begin{lemmaA}[Existence of perturbation function; Lemma~B.4, \citealt{cribeiro26minimax}] \label{lem:phi-exists} Let $(\X,\tau_\X)$ be a topological space, $\P_0 \in \mathcal M_1^+(\X)$, and $\varphi_0\in\mathcal C_b(\X)$ such that there exists no $c\in\R$ such that $\varphi_0=c$ holds $\P_0$-a.s. Then there exists $\varphi \in  \mathcal C_{b}(\X)$ such that $\varphi(x) \neq 0$ on a set of positive $\P_0$ measure and $\E_{\P_0}[\varphi(X)] = 0$.
\end{lemmaA}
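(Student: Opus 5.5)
The plan is to center $\varphi_0$. Since $\varphi_0\in\mC_b(\X)$ is bounded and Borel measurable, it is $\P_0$-integrable. Set $m\coloneq\E_{\P_0}[\varphi_0(X)]\in\R$ and define $\varphi\coloneq\varphi_0-m$.

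First I check membership and centering. As a difference of a bounded continuous function and a constant, $\varphi$ lies in $\mC_b(\X)$. By linearity of the expectation and $\P_0(\X)=1$,
\begin{align}
\E_{\P_0}[\varphi(X)]=\E_{\P_0}[\varphi_0(X)]-m=0.
\end{align}

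Next I show that $\varphi$ is nonzero on a set of positive measure. Let $A\coloneq\{x\in\X:\varphi(x)\neq 0\}=\varphi^{-1}(\R\setminus\{0\})$. This set is open, since $\varphi$ is continuous, and hence Borel. Its complement is $\{x:\varphi_0(x)=m\}$. Suppose, for contradiction, that $\P_0(A)=0$. Then $\varphi_0=m$ holds $\P_0$-a.s., which contradicts the hypothesis that $\varphi_0$ is not $\P_0$-a.s.\ equal to any constant $c\in\R$, in particular not to $c=m$. Therefore $\P_0(A)>0$, and $\varphi(x)\neq0$ for every $x\in A$.

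There is no real obstacle here. The only point needing care is to apply the "not a.s.\ constant" hypothesis with the specific constant $c=m$, and to note that boundedness makes the mean $m$ finite.
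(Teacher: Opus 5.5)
Your proof is correct and complete. Centering via $\varphi=\varphi_0-\E_{\P_0}[\varphi_0(X)]$ works: boundedness makes the mean finite, continuity makes $\{\varphi\neq0\}$ open and hence Borel, and a null such set would force $\varphi_0=m$ $\P_0$-a.s., contradicting the hypothesis with $c=m$. This paper does not reprove the lemma but imports it from \citet{cribeiro26minimax}, and your centering construction is the natural one for it.
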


\begin{lemmaA}[Perturbed measures are distinct; Lemma~B.5, \citealt{cribeiro26minimax}]\label{lem:p_n-neq-p_0}
Let $(\X,\tau_\X)$ be a topological space, $\P_0 \in \mathcal M_1^+(\X)$, $\varphi\in\mathcal C_b(\X)$ such that $\varphi \neq 0$ on a set of positive $\P_0$ measure, and $\varepsilon >0$. Define the measure $\P$ on $(\X,\mathcal B(\X))$ as $\P(A) = \int_A1+\varepsilon\varphi(x)\d \P_0(x)$. Then $\P_0 \neq \P$.
\end{lemmaA}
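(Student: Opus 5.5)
We argue by contradiction, using the almost-sure uniqueness of the Radon--Nikodym derivative, in the same way as the first case of Lemma~\ref{cor:p_n-neq-q_m}.

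First we check that $\P$ is a genuine measure with density $1+\varepsilon\varphi$ with respect to $\P_0$. This holds because $\varphi\in\mC_b(\X)$ makes $1+\varepsilon\varphi$ bounded, hence $\P_0$-integrable. If the density can be negative somewhere, we read $\P$ as a finite signed measure; the uniqueness argument below covers that case too.

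Now suppose that $\P=\P_0$. Then both $1$ and $1+\varepsilon\varphi$ are densities of $\P_0$ with respect to $\P_0$. By almost-sure uniqueness, $1=1+\varepsilon\varphi(x)$ for $\P_0$-almost every $x$. More concretely, the integral of $\varepsilon\varphi$ over every Borel set is zero, and taking the sets $\{\varphi>0\}$ and $\{\varphi<0\}$ gives the conclusion directly. Since $\varepsilon>0$, it follows that $\varphi=0$ $\P_0$-a.s.

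By hypothesis, however, there is a Borel set $A$ with $\P_0(A)>0$ on which $\varphi\neq0$. As in the second case of Lemma~\ref{cor:p_n-neq-q_m}, the set $A\cap E$ has positive measure, where $E$ is the full-measure set on which $\varphi=0$. On $A\cap E$ the function $\varphi$ is both zero and nonzero, which is a contradiction. Hence $\P\neq\P_0$.

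No step here presents a real obstacle. The only point that needs care is the well-definedness and uniqueness of the density, and testing the measures on $\{\varphi>0\}$ and $\{\varphi<0\}$ handles it.
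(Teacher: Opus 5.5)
Your proof is correct and is essentially the paper's argument. The paper imports this lemma from \citet{cribeiro26minimax} without reproving it, but the first case of Lemma~\ref{cor:p_n-neq-q_m} proves the identical claim (with $-\nu$ in place of $\varepsilon$) by the same contradiction: almost-sure uniqueness of the Radon--Nikodym derivative forces $\varphi=0$ $\P_0$-a.s. Your explicit test on $\{\varphi>0\}$ and $\{\varphi<0\}$ just makes that uniqueness step self-contained, and it also covers the case where $1+\varepsilon\varphi$ takes negative values.
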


\begin{lemmaA}[KL divergence of product measures; p.~85, \citealt{tsybakov09introduction}]
\label{lem:tsy-kl-prod-measure}
Let $\P=\otimes_{j=1}^n\P_j$ and $\Q=\otimes_{j=1}^n\Q_j$. Then
$\mathrm{KL}(\P\|\Q) = \sum_{j=1}^n\mathrm{KL}(\P_j\|\Q_j)$.
\end{lemmaA}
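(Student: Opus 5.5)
The statement to prove is Lemma~\ref{lem:tsy-kl-prod-measure}: $\KL(\otimes_{j=1}^n\P_j\|\otimes_{j=1}^n\Q_j)=\sum_{j=1}^n\KL(\P_j\|\Q_j)$. I would split into two cases according to absolute continuity, with the convention that $\KL(\cdot\|\cdot)=+\infty$ when absolute continuity fails. If $\P_j\ll\Q_j$ for every $j$, then $\P\ll\Q$. If instead $\P_{j_0}\not\ll\Q_{j_0}$ for some $j_0$, take $A$ with $\Q_{j_0}(A)=0<\P_{j_0}(A)$. The cylinder $\X\times\cdots\times A\times\cdots\times\X$ then has $\Q$-measure $0$ but positive $\P$-measure. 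So both sides equal $+\infty$, and this case is immediate.

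In the main case, write $p_j=\tfrac{\d\P_j}{\d\Q_j}$. First I would show that $\tfrac{\d\P}{\d\Q}(x_1,\dots,x_n)=\prod_{j=1}^n p_j(x_j)$ holds $\Q$-a.s. The measure $A\mapsto\int_A\prod_j p_j\,\d\Q$ agrees with $\P$ on measurable rectangles, by Fubini (Theorem~\ref{th::ex::Fubini}; the integrands are nonnegative, so Tonelli suffices). Rectangles form a $\pi$-system generating the product $\sigma$-algebra, so the $\pi$–$\lambda$ theorem gives equality of the two measures. Uniqueness of the Radon–Nikodym derivative then gives the claim.

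Next, take logarithms: $\ln\tfrac{\d\P}{\d\Q}=\sum_j\ln p_j(x_j)$ holds $\P$-a.s., since $p_j>0$ $\P_j$-a.s. Integrating against $\P$, the $j$-th term only depends on $x_j$, and its marginal under $\P$ is $\P_j$. Each term therefore contributes $\int\ln p_j\,\d\P_j=\KL(\P_j\|\Q_j)$.

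The main obstacle is justifying linearity of the integral when some integrands are not absolutely integrable. I would handle this through the negative parts. Since $x\ln x\ge x-1$, we get $\int(\ln p_j)^-\,\d\P_j=\int p_j(\ln p_j)^-\,\d\Q_j\le \int (1-p_j)^+\,\d\Q_j\le 1<\infty$. Each negative part is thus integrable, so each $\int\ln p_j\,\d\P_j$ is well defined in $(-\infty,\infty]$. Summing finitely many such quasi-integrable functions preserves additivity of the integral: the sum of the negative parts is integrable, which lets the positive parts be handled by monotone convergence. This yields the identity in $[0,\infty]$.
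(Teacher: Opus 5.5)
Your proof is correct. The paper gives no proof of its own here; it imports the lemma from Tsybakov (p.~85). Your argument is the standard one behind that citation: you factor $\frac{\d\P}{\d\Q}(x)=\prod_{j=1}^n\frac{\d\P_j}{\d\Q_j}(x_j)$ via a $\pi$--$\lambda$ argument, then integrate the resulting sum of log-likelihood ratios against $\P$, whose $j$-th marginal is $\P_j$. You also correctly supply the details the citation leaves implicit: the case $\P_{j_0}\not\ll\Q_{j_0}$, and the integrability of $(\ln p_j)^-$ via $x\ln x\ge x-1$, which rules out $\infty-\infty$ in the additivity step.
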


The following theorem allows to exchange the order of integration. We recall that $\sigma$-finiteness holds for any probability measure.
\begin{theoremA}[Fubini-Tonelli; Theorem 2.37.b, \citealt{folland99real}]\label{th::ex::Fubini}
    Suppose that $(\mathcal{X},\mathcal{M},\nu_1)$ and $(\mathcal{Y},\mathcal{N},\nu_2)$ are $\sigma$-finite measure spaces. Let $f:  \mathcal{X}\times \mathcal{Y} \to \R$ be a measurable function.
    If $\int_{\X\times\mathcal Y} |f(x,y)| \d (\nu_1\otimes\nu_2)(x,y) < \infty$, then
    \begin{equation}
        \int_{\mathcal{X}\times \mathcal{Y}}f(x,y)\d(\nu_1\otimes\nu_2)(x, y) = \int_\mathcal{X} \left[\int_\mathcal{Y} f(x,y) \d \nu_2(y) \right]\d \nu_1 (x) = \int_\mathcal{Y} \left[\int_\mathcal{X} f(x,y) \d \nu_1(x) \right]\d \nu_2 (y).
    \end{equation}
\end{theoremA}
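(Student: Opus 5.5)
We would prove the statement in two stages: first the Tonelli version for non-negative measurable functions, and then the integrable case by splitting into positive and negative parts.

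\textbf{Tonelli for indicators.} Let $\mathcal{D}$ be the family of sets $E\in\mathcal M\otimes\mathcal N$ with three properties: the sections $E_x=\{y:(x,y)\in E\}$ and $E^y$ are measurable; $x\mapsto\nu_2(E_x)$ and $y\mapsto\nu_1(E^y)$ are measurable; and
\begin{align}
(\nu_1\otimes\nu_2)(E)=\int_\X\nu_2(E_x)\,\d\nu_1(x)=\int_\Y\nu_1(E^y)\,\d\nu_2(y).
\end{align}
Measurable rectangles $A\times B$ lie in $\mathcal{D}$, since both iterated integrals equal $\nu_1(A)\nu_2(B)$. Rectangles form a $\pi$-system, and finite disjoint unions of them form an algebra generating $\mathcal M\otimes\mathcal N$.

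We first treat finite measures. Closure of $\mathcal{D}$ under increasing unions follows from monotone convergence. Closure under proper differences (or decreasing intersections) follows from dominated convergence, which applies because the measures are finite. The monotone class (or $\pi$-$\lambda$) theorem then gives $\mathcal{D}=\mathcal M\otimes\mathcal N$.

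For $\sigma$-finite measures, we exhaust $\X\times\Y$ by rectangles $X_k\times Y_k$ of finite measure, apply the finite case on each, and pass to the limit by monotone convergence. The product measure is unique here by $\sigma$-finiteness, so there is no ambiguity in $\nu_1\otimes\nu_2$.

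\textbf{Extension to general $f\ge0$.} The identity extends to non-negative simple functions by linearity. For a general measurable $f\ge0$, we take simple $0\le s_k\uparrow f$ and apply monotone convergence three times: on the product space, to the inner integral for each fixed $x$ (or $y$), and to the outer integral. This gives equality of all three integrals in $[0,\infty]$, together with measurability of the inner integrals as functions of the outer variable.

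\textbf{Integrable case.} Now let $f$ be measurable with $\int|f|\,\d(\nu_1\otimes\nu_2)<\infty$. Applying Tonelli to $|f|$ shows that $\int_\Y|f(x,y)|\,\d\nu_2(y)<\infty$ for $\nu_1$-a.e.\ $x$, and symmetrically for $y$. We write $f=f^+-f^-$ and apply Tonelli to each part. The iterated integrals are finite almost everywhere, so subtracting is legitimate off a null set, where the inner integral can be set to $0$. This yields both displayed equalities.

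\textbf{Main obstacle.} The main obstacle is the monotone-class step, specifically closure of $\mathcal D$ under decreasing limits or differences. This needs finiteness to invoke dominated convergence, which is why we reduce $\sigma$-finite measures to finite ones before applying the monotone class theorem. Measurability of the sections and of the inner integral maps is delicate at the same point, and we would establish it in the same inductive way rather than separately.
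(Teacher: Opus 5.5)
Your proposal is correct and is the standard argument. The paper gives no proof of its own but cites Folland (Theorem~2.37, built on the set-level result just before it), whose proof follows exactly your route: a monotone class argument for sets under finite measures, exhaustion by finite-measure rectangles for the $\sigma$-finite case, simple-function approximation with monotone convergence for $f\ge 0$, and the $f^+-f^-$ split with a.e.\ finite inner integrals for the integrable case. One small precision: in the finite case, closure of $\mathcal D$ under proper differences needs only subtraction of finite quantities, whereas dominated convergence is what handles decreasing intersections.
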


\begin{theoremA}[Le Cam's method; \citealt{lecam73convergence}; Theorem 2.2, \citealt{tsybakov09introduction}]
\label{th:le-cam}
Let $\mathcal Y$ be a measurable space, $(\Theta,d)$ a semi-metric space, and $\mathcal P_{\Theta} =
\{\P_\theta : \theta \in  \Theta\}$ a class of probability measures
on $\mathcal Y$ indexed by $\Theta$. We
observe data $D \sim \P_{\theta} \in \mathcal P_{ \Theta}$ with some unknown
parameter $\theta$. The goal is to estimate $\theta$. Let $\hat \theta = \hat \theta(D)$ be an estimator of $\theta$ based on~$D$.
Assume that there exist $\theta_1,\theta_2\in \Theta$ such that
$d(\theta_1,\theta_2) \ge 2s > 0$ and $\mathrm{KL}(\P_{\theta_2}||\P_{\theta_1})
\le \alpha <\infty$ for $\alpha > 0$. Then
\begin{align*}
  \inf_{\hat \theta}\sup_{\theta\in \Theta}\P_\theta\big(d\big(\hat \theta,\theta\big) \ge s\big) \ge f(\alpha),
\end{align*}
with $f(\alpha) \coloneq \max \!\big\{\exp(-\alpha) / 4,(1-\sqrt{\alpha/2})\big\}>0$.
\end{theoremA}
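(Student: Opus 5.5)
The plan is the standard reduction from estimation to a two-point hypothesis test, followed by a lower bound on the testing error in terms of the Kullback--Leibler divergence. Fix an arbitrary estimator $\hat\theta$ and the pair $\theta_1,\theta_2\in\Theta$ from the statement, with $d(\theta_1,\theta_2)\ge 2s$. I would first bound the supremum over $\Theta$ from below by the maximum over $\{\theta_1,\theta_2\}$:
\begin{align}
\sup_{\theta\in\Theta}\P_\theta\big(d(\hat\theta,\theta)\ge s\big)\ge \max_{j\in\{1,2\}}\P_{\theta_j}\big(d(\hat\theta,\theta_j)\ge s\big).
\end{align}
This step implicitly requires $d$ to be defined on the range of $\hat\theta$. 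In every application in the paper the range is $\R$ or $\H_K$, where $d$ is a norm distance, so this causes no problem.

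\textbf{Reduction to testing.} Define the minimum-distance test $\psi=\psi(D)\in\{1,2\}$, equal to $1$ if $d(\hat\theta,\theta_1)\le d(\hat\theta,\theta_2)$ and equal to $2$ otherwise. Suppose $\psi\neq j$. Then $d(\hat\theta,\theta_{j'})\le d(\hat\theta,\theta_j)$ for the other index $j'$. The triangle inequality of the semi-metric then gives
\begin{align}
2s\le d(\theta_1,\theta_2)\le d(\hat\theta,\theta_j)+d(\hat\theta,\theta_{j'})\le 2\,d(\hat\theta,\theta_j),
\end{align}
that is, $d(\hat\theta,\theta_j)\ge s$. Hence $\P_{\theta_j}(d(\hat\theta,\theta_j)\ge s)\ge \P_{\theta_j}(\psi\neq j)$. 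Bounding the maximum by the average, and then minimising over all tests, I obtain
\begin{align}
\max_j \P_{\theta_j}(\psi\ne j)\ge \tfrac12\big[\P_{\theta_1}(\psi\ne1)+\P_{\theta_2}(\psi\ne2)\big]\ge \tfrac12\int\min(\d\P_{\theta_1},\d\P_{\theta_2}).
\end{align}
The last inequality is the usual Neyman--Pearson bound: the integral equals $1-\mathrm{TV}(\P_{\theta_1},\P_{\theta_2})$.

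\textbf{Bounding the affinity by KL.} There are two routes, one for each term inside the maximum. First, Pinsker's inequality $\mathrm{TV}\le\sqrt{\KL(\P_{\theta_2}\|\P_{\theta_1})/2}\le\sqrt{\alpha/2}$ gives the lower bound $\tfrac12(1-\sqrt{\alpha/2})$. Second, the Bretagnolle--Huber inequality $\int\min(\d\P,\d\Q)\ge\tfrac12\exp(-\KL(\P\|\Q))$ gives $e^{-\alpha}/4$. I would prove Bretagnolle--Huber via Cauchy--Schwarz:
\begin{align}
\Big(\int\sqrt{\d\P\,\d\Q}\Big)^2\le \int\min\cdot\int\max\le 2\int\min,
\end{align}
combined with Jensen's inequality,
\begin{align}
\Big(\int\sqrt{\d\P\,\d\Q}\Big)^2=\exp\Big(2\ln\E_{\P}\big[\sqrt{\d\Q/\d\P}\big]\Big)\ge \exp\big(-\KL(\P\|\Q)\big).
\end{align}
If $\P_{\theta_2}\not\ll\P_{\theta_1}$, the KL divergence is infinite, which is excluded by the assumption $\KL\le\alpha<\infty$. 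Taking the better of the two bounds and then the infimum over $\hat\theta$ finishes the argument.

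\textbf{Expected obstacle: the constant in the Pinsker branch.} The argument above yields $\max\{e^{-\alpha}/4,(1-\sqrt{\alpha/2})/2\}$. This is the form in Tsybakov's Theorem~2.2 and in Section~\ref{sec:results}. The displayed $f(\alpha)$ in the statement omits the factor $\tfrac12$ in the second term. I do not see how to remove that factor: averaging over the two hypotheses is tight in general. So I would prove the version with $(1-\sqrt{\alpha/2})/2$. This does not affect any downstream use, since only $f(\alpha)>0$ matters, and $e^{-\alpha}/4>0$ already guarantees that.

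A secondary technical point is measurability. I would need $\{\psi=j\}$ and $\{d(\hat\theta,\theta_j)\ge s\}$ to be measurable. This holds when $\hat\theta$ is measurable into a metric space carrying its Borel $\sigma$-algebra, which I would simply assume for the estimators considered.
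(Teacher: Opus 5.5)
Your argument is correct and is the standard proof of Tsybakov's Theorem~2.2, which the paper cites without proof: reduction to the minimum-distance test, the bound $\tfrac12\int\min(\d\P_{\theta_1},\d\P_{\theta_2})$, and then Pinsker together with the Bretagnolle--Huber inequality (via Le Cam's inequality and Jensen). Your flag on the constant is also right: the displayed $f(\alpha)$ is a typo for $\max\{e^{-\alpha}/4,(1-\sqrt{\alpha/2})/2\}$ (as in Tsybakov and in the paper's own Section~\ref{sec:results}), and the version without the $\tfrac12$ is genuinely false---take $\P_{\theta_1}=\P_{\theta_2}$ uniform on $[0,1]$, $\Theta=\{0,2s\}\subset\R$, and $\hat\theta=0$ on $\{D<\tfrac12\}$, $\hat\theta=2s$ otherwise, which gives risk $\tfrac12<1-\sqrt{\alpha/2}$ for every $\alpha<\tfrac12$---while all downstream uses only need $f(\alpha)>0$.
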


\bibliography{BIB/collected,BIB/collected_plus,BIB/publications_own
}

\vfill
\end{document}